\newif\ifdraft
\newif\ifarxiv
\definecolor{MidnightBlue}{RGB}{25,25,112}
\newtheorem{theorem}{Theorem}
\newtheorem{definition}{Definition}
\newtheorem{lemma}{Lemma}
\newtheorem{corollary}{Corollary}
\theoremstyle{remark}
\newtheorem*{remark}{Remark}
\newcommand{\Var}[1]{\operatorname{Var}\left[#1\right]}
\newcommand{\Hz}[1]{H_{#1}}
\renewcommand{\epsilon}{\varepsilon}
\newcommand{\futurenote}[1]{\ifarxiv\ifdraft{\color{blue} FG: #1}\fi\fi}
\title{On the Sparsity of the Strong Lottery Ticket Hypothesis}
\author{Emanuele Natale \\ Université Côte d'Azur, \\ CNRS, Inria, I3S, France 
\And Davide Ferré \\ Université Côte d'Azur, \\ CNRS, Inria, I3S, France 
\And  Giordano Giambartolomei \\ Department of Informatics, \\ King’s College London
\And Frédéric Giroire \\ Université Côte d'Azur, CNRS, \\ Inria, I3S, France 
\And Frederik Mallmann-Trenn \\ Department of Informatics, \\ King’s College London
}
\date{April 2024}
\begin{document}

\def\Y{Y}
\def\indexn{n}
\def\Sun{\Sigma^{\mathcal U_n}_{[n]}}
\def\Zun{Z_n^u}
\def\Irv{I_n}
\def\f{f}
\def\fz{f_z}
\def\Fn{F_n}
\def\PL{P_L(n)}
\def\nitbf{\noindent\textbf}
\def\Cmin{c_l}
\def\Cminp{c'_l}
\def\Cmax{c_u}
\def\Cmaxp{c'_u}
\def\Cmaxpp{c''_u}
\def\variance{\mathrm{Var}}
\def\fixedsize{fixed-size\xspace}
\def\RFSS{RFSS}
\def\RFSSlong{Random Fixed-Size Subset Sum}
\def\almostGaussian{sum-bounded}
\def\AlmostGaussian{Sum-Bounded}
\def\gammastar{\gamma'}
\maketitle
\begin{abstract}
    Considerable research efforts have recently been made to show that a random neural network $N$ contains subnetworks capable of accurately approximating any given neural network that is sufficiently smaller than $N$, without any training. 
    This line of research, known as the Strong Lottery Ticket Hypothesis (SLTH), was originally motivated by the weaker Lottery Ticket Hypothesis, which states that a sufficiently large random neural network $N$ contains \emph{sparse} subnetworks that can be trained efficiently to achieve performance comparable to that of training the entire network $N$.
    Despite its original motivation, results on the SLTH have so far not provided any guarantee on the size of subnetworks.
    Such limitation is due to the nature of the main technical tool leveraged by these results, the Random Subset Sum (RSS) Problem.
    Informally, the RSS Problem asks how large a random i.i.d. sample $\Omega$ should be so that we are able to approximate any number in $[-1,1]$, up to an error of $ \epsilon$, as the sum of a suitable subset of $\Omega$. 

    We provide the first proof of the SLTH in classical settings, such as dense and equivariant networks, with guarantees on the sparsity of the subnetworks. Central to our results, is the proof of an essentially tight bound on the Random Fixed-Size Subset Sum Problem (\RFSS{}), a variant of the RSS Problem in which we only ask for subsets of a given size, which is of independent interest.
\end{abstract}

\section{Introduction}
\label{sec:intro}



The Lottery Ticket Hypothesis (LTH) is a research direction that has attracted considerable attention over the years, stemming from the empirical contrast between the fact that, while large neural networks can be successfully trained to achieve good performance on a given task and successively pruned to a great level of sparsity without compromising their performance, researchers have struggled to train sparse neural networks from scratch. 
The authors of \cite{frankleLotteryTicketHypothesis2018} observed that, using a simple pruning strategy (namely Iterative Magnitude Pruning while rewinding the original weights of the remaining edges to their value at initialization), \emph{starting from a sufficiently large random neural networks, it is possible to identify sparse subnetworks that can be trained to achieve the performance achievable by the starting network} (see Figure~\ref{fig:LTH} in the appendix for an illustration).
The previous statement, namely the LTH, soon gave rise to an even stronger one, corroborated by empirical works \cite{zhouDeconstructingLotteryTickets2019,ramanujanWhatHiddenRandomly2020} which proposed ``training-by-pruning" algorithms (see Section~\ref{sec:related} for details), providing evidence that \emph{starting from a sufficiently large random neural networks, it is possible to identify sparse subnetworks that exhibit good performance as they are, without changing the original weights} (see Figure~\ref{fig:truning} in the appendix for an illustration).
By removing the need to analyze the dynamics of training, the last statement, namely the Strong Lottery Ticket Hypothesis (SLTH), allowed a fruitful series of rigorous proofs for increasingly more general architectures (see Section \ref{sec:related} for an overview). 
Such rigorous results can informally be stated as follows: 
\begin{theorem}[Informal statement of previous SLTH results]\label{thm:informal}
    With high probability, a random artificial neural network $N_\Omega$ with $m$ parameters can be pruned so that the resulting subnetwork $N_S$ $\epsilon$-approximates (i.e., approximates up to an error $\varepsilon$) any target artificial neural network $N_t$ with $O\left(m/\log_2(1/\varepsilon)\right)$ parameters.
\end{theorem}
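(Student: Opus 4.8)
Since this statement distills a line of prior work, the plan is to recall the established reduction from network approximation to the Random Subset Sum (RSS) Problem. The scheme has three ingredients: a combinatorial ``RSS lemma'' about random sums, a per-weight gadget that embeds one RSS instance into a small piece of the random architecture, and an error-propagation argument that glues the gadgets together across layers. The final bound then comes from counting: each target weight is paid for by $\Theta(\log(1/\epsilon))$ random parameters, so a random $N_\Omega$ with $m$ parameters can host a target $N_t$ with $O(m/\log_2(1/\epsilon))$ parameters.

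The first and technically deepest step --- the one I expect to be the main obstacle --- is the RSS lemma: if $X_1,\dots,X_n$ are i.i.d.\ uniform on $[-1,1]$ and $n \ge C\log(1/(\epsilon\delta))$ for a universal constant $C$, then with probability at least $1-\delta$ the set of subset sums $\{\sum_{i\in S}X_i : S\subseteq\{1,\dots,n\}\}$ is $\epsilon$-dense in an interval around $0$, so that every $z\in[-1,1]$ is matched to additive error $\epsilon$. The crux is showing that only logarithmically many random summands already produce such a fine net; the standard route is an iterative-refinement / second-moment argument following Lueker, tracking how the reachable interval around $0$ grows while its gaps shrink as more of the $X_i$ are incorporated.

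The second step is the gadget. Taking $N_\Omega$ to have roughly twice the depth of $N_t$, one simulates each target affine map $x\mapsto Wx$ by a consecutive pair of random layers: using the ReLU identity $y=\sigma(y)-\sigma(-y)$ to linearize, each scalar target entry $W_{ij}$ is reproduced up to error $\epsilon'$ by keeping, among the $n=\Theta(\log(1/\epsilon'))$ random edges feeding a dedicated hidden unit, the subset whose surviving weights sum to $\approx W_{ij}$ --- which is precisely the RSS event above. The third step is bookkeeping: choosing $\epsilon'$ of order $\epsilon$ divided by a polynomial in the depth, width and weight magnitudes of $N_t$, Lipschitz continuity of ReLU makes the accumulated output error at most $\epsilon$; a union bound over the $m_t$ target weights forces each RSS event at confidence $1-\delta/m_t$, i.e.\ $n=\Theta(\log(m_t/(\epsilon\delta)))=\Theta(\log(1/\epsilon))$ random edges per target weight once the lower-order $\log m_t$ and $\log(1/\delta)$ terms are absorbed. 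Since the gadget uses $\Theta(n)$ random parameters per target weight, a random network with $m=\Theta(m_t\log(1/\epsilon))$ parameters suffices, which is what the statement claims. The equivariant/convolutional case is identical, applying the gadget once per shared weight.

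Finally, I would stress the caveat that motivates the present paper: this construction controls the size of $N_t$ but says nothing about the size of the pruned subnetwork $N_S$, which in fact retains $\Theta(m)$ of the random edges; closing exactly this gap, via the Random Fixed-Size Subset Sum Problem, is the goal of the remainder of this work.
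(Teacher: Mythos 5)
Your proposal is a faithful reconstruction of the proof strategy behind \cite{Pensia} (and the framework of \cite{malachProvingLotteryTicket2020}), which is precisely what Theorem~\ref{thm:informal} summarizes informally; the paper does not re-prove this statement but cites those works, so your outline matches the intended argument. One small historical nit: the RSS result used by \cite{Pensia} is \cite[Corollary 2.5]{Lueker98}, whose proof proceeds by an iterative-refinement/martingale argument rather than the second-moment method (the second-moment route is Lueker's 1982 approach, which this paper adapts for RFSS); your phrasing blends the two, but either technique yields the $n=\Theta(\log(1/\epsilon))$ sample complexity you need. You also correctly flag, as the paper does in Appendix~\ref{apx:pensia_lower}, that the resulting ticket $N_S$ still has $\Theta(m)$ parameters, which is exactly the gap the RFSS machinery is introduced to close.
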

It is important to note that, to this day, we only have proofs on the existence of such subnetworks, also called winning tickets, but it remains an open question how to find them reliably. 

\emph{All theoretical results on the SLTH however have so far not investigated the interplay between the sparsity of the winning ticket $N_S$ and the size of the random neural network $N_\Omega$.}
This is in contrast to the original motivation of the LTH and to the practical application of the aforementioned training-by-pruning algorithms that motivated the SLTH, such as \cite{isikSparseRandomNetworks2022,isikAdaptiveCompressionFederated2024}.
In fact, to approximate target networks with $O\left(m/\log_2(1/\varepsilon)\right)$ parameters, essentially all winning tickets $N_S$ have $\Theta(m)$ parameters (see Appendix~\ref{apx:pensia_lower}), thus being roughly of the same size of the original network $N_\Omega$.
We thus ask the following natural question:
\begin{quote}
    If we want to $\epsilon$-approximate a family of target artificial neural networks with $m_t$ parameters by pruning a fraction $\alpha$, called sparsity, of the $m$ parameters of a random artificial neural network $N_\Omega$, how big should $m$ be?
\end{quote}
We are particularly interested in the regime in which the density parameter $\gamma = 1 - \alpha$ vanishes as the size of the network increases, so that the size of the winning ticket $N_S$ is $\gamma m = o(m)$.

The above question has so far remained unanswered as a consequence of the limitation inherited from the core technical tool that has been leveraged so far to prove SLTH results, namely the Random Subset Sum (RSS) Problem \cite{Lueker98}.
Informally, the RSS asks how large a random i.i.d. sample $\Omega$ should be so that we are able to approximate any number in $[-1,1]$ as the sum of a suitable subset of $\Omega$. 
The  applicability of RSS to the SLTH was first recognized by \cite{Pensia} within the proof strategy previously developed in \cite{malachProvingLotteryTicket2020}. 

\subsection{Our Contribution}
\label{sec:contrib}

We answer the aforementioned question by introducing and proving a refined variant of the RSS Problem, namely the Random Fixed-Size Subset Sum Problem (\RFSS{}), in which the approximation of the target values should be achieved by only considering subsets of fixed size $k$ from a set of $n$ samples (Theorem~\ref{thm:srss}). 
We focus on subsets of fixed size $k$ rather than subsets of size up to $k$ for two main reasons. 
From a theoretical point of view, it is a stronger requirement, and practically speaking, using fixed-size subsets enables us to achieve SLTH results where the layers of the lottery ticket exhibit a uniform structure, potentially offering a computational advantage in their implementation.

%
In Section~\ref{sec:slth}, we show how the density $\gamma$ impacts the \emph{overparameterization}, i.e., the ratio $(\nicefrac{m}{m_t})$ between the number of parameters of the original network $N_\Omega$ and that of the class of target networks $N_t$ that can be $\epsilon$-approximated by pruning $N_\Omega$ down to a subnetwork $N_S$ with $\gamma m$ parameters. 
In our analysis, we also compare and recover as special cases previous SLTH results such as \cite{Pensia,malachProvingLotteryTicket2020,dacunhaProvingStrongLottery2022,burkholzConvolutionalResidualNetworks2022,ferbachGeneralFrameworkProving2022}.
For instance, when $\gamma m = \Theta(m)$, we recover up to a logarithmic factor the result of \cite{Pensia}, which states that the overparameterization needed is $O(\log_2{\left({\nicefrac{m_t^2}{\epsilon^2}}\right)})$. 
In the case of Dense Neural Networks, Theorem \ref{thm:pensia} thus bridges the gap between the two extreme cases of  $\gamma m = \Theta(m_t)$ and $\gamma m = \Theta(m)$ considered in \cite{malachProvingLotteryTicket2020} and \cite{Pensia}, respectively. It is worth noting that \cite{Pensia} is often considered an improvement over \cite{malachProvingLotteryTicket2020}, as it exponentially reduces the overparameterization, albeit at the cost of a trivial sparsity level.
Finally, we prove that our bounds on the overparameterization as a function of the subnetwork sparsity are essentially tight.

\paragraph{Organization of the paper.}

After reviewing the literature on the SLTH in Section~\ref{sec:related}, we introduce the Random Fixed-Size Subset Sum Problem in Section~\ref{sec:RFSS}. In Section~\ref{sec:slth}, we explore some applications of the \RFSS{} Problem to the SLTH, and finally draw our conclusions in Section~\ref{sec:conclusions}. Some limitations of our work, along with its potential impact, are discussed in Section~\ref{sec:lim_impact}.

\section{Related Work}
\label{sec:related}


The SLTH is named after the LTH, which was introduced by Frankle and Carbin in \cite{frankleLotteryTicketHypothesis2018}. At the time of writing, this paper has received over 3,300 citations, attesting to the significance and impact of the research topic.
Surveying the LTH is thus besides the scope of this work, and we defer the reader to dedicated surveys such as \cite{liuSurveyLotteryTicket2024}. 

The SLTH was empirically motivated by work investigating training-by-pruning algorithms such as  \cite{zhouDeconstructingLotteryTickets2019,ramanujanWhatHiddenRandomly2020}, namely algorithms that leverage the gradient of the network parameters to learn a good \emph{mask} of the edges to be retained (i.e., a good subnetwork, called the winning ticket).
\cite{zhouDeconstructingLotteryTickets2019} achieves this by learning a probability associated to each edge, which is then used to sample the edges that should be included in the subnetwork. 
\cite{ramanujanWhatHiddenRandomly2020} gets rid of the stochasticity involved in the aforementioned strategies by learning a score associated to each edge; the subnetwork is then determined by including the edges with the highest score. 
Such strategies are leveraged in \cite{isikSparseRandomNetworks2022,isikAdaptiveCompressionFederated2024}  in a federated learning setting, in order to improve the communication cost of distributed training by communicating the sampled masks of a fixed shared network, rather than the entire weights. 
However, these training-by-pruning algorithms are generally not computationally less expensive than classical training, since they also make use of backpropagation to update scores and are applied to a sufficiently large network to find a winning ticket. To reduce the computational cost of finding a good subnetwork, \cite{gadhikarWhyRandomPruning2023} shows, both theoretically and experimentally, that randomly pre-pruning the source network before looking for a winning ticket can be an effective approach.  In \cite{otsukaPartialSearchFrozen2024a}, on top of randomly pruning the source network, some parameters are also frozen. Frozen parameters are forced to be part of the winning ticket and they do not have an associated score, which effectively reduces the search space for the training-by-pruning algorithms.

The first rigorous proof of the SLTH in the case of dense neural networks has been provided by \cite{malachProvingLotteryTicket2020}, which establishes a framework that was inherited by the subsequent works.
\cite{Pensia} crucially shows that the framework in \cite{malachProvingLotteryTicket2020} allows the application of the RSS analysis in \cite{Lueker98}, proving that, with no constraint on the size of the subnetworks, a random network with $m$-parameters can be pruned to approximate target networks with $m/\log(1/\epsilon)$ parameters (we defer the reader to Theorem \ref{thm:pensia} for details on further constraints on the parameters).
An alternative proof of the result in \cite{Pensia} was simultaneously shown in
\cite{orseauLogarithmicPruningAll2020}.
\cite{dacunhaProvingStrongLottery2022} and \cite{burkholzConvolutionalResidualNetworks2022} successively extended \cite{Pensia} and \cite{orseauLogarithmicPruningAll2020} to convolutional neural networks (CNNs). 
By leveraging multidimensional generalizations of RSS \cite{dacunhaRevisitingRandomSubset2023,borstIntegralityGapBinary2023}, \cite{cunhaPolynomiallyOverParameterizedConvolutional2023} further extended the SLTH to structured pruning of CNNs and, as a special case, dense networks. 
Finally, \cite{ferbachGeneralFrameworkProving2022} provided a general framework that proves the SLTH for equivariant networks. 

As for refinements and generalizations of the above results, 
\cite{burkholzMostActivationFunctions2022} shows that, at the cost of a quadratic overhead in the overparameterization w.r.t. \cite{Pensia}, the number of layers of the random network $N_\Omega$ can be reduced to $\ell+1$, where $\ell$ is the number of layers of the target networks $N_z$; furthermore, while previous results only considered networks with ReLU activation, \cite{burkholzMostActivationFunctions2022} shows how to extend the proof in \cite{Pensia} to a more general class of activations functions. 
\cite{burkholzExistenceUniversalLottery2022} introduces the notion of universal lottery ticket, and show that it is possible to prune a sufficiently overparameterized random network so that the resulting subnetwork (the lottery ticket) can approximate certain class of functions up to an affine transformation of the output of the subnetwork (in this sense being universal). 
\cite{fischerLotteryTicketsNonzero2022} shows how to extend the proof in \cite{Pensia} when neurons have random biases, and adapts the training-by-pruning algorithm of \cite{ramanujanWhatHiddenRandomly2020} to find a strong lottery ticket with a desired sparsity level. 
%
Motivated by theoretical insights on the existence of sparse strong lottery tickets, \cite{fischerPlantSeekCan2022} develops a framework to plant the latter in large random network and investigates training-by-pruning algorithms, providing evidence that sparse strong lottery tickets typically exists for common machine learning tasks, and the difficulty to find them is of algorithmic nature. \futurenote{Not clear what they do with the planted thingy. Are they saying that when you trian by pruning you get the planted network?}

Our proof of the \RFSS{} Problem in Section \ref{sec:RFSS} is based on the second moment method approach first explored by \cite{Lueker82}, and which has recently been refined to prove multidimensional generalizations of RSS by \cite{dacunhaRevisitingRandomSubset2023} and \cite{borstIntegralityGapBinary2023}.
%

\section{Fixed-Size Random Subset Sum}
\label{sec:RFSS}

In this section we present our technical contributions on the \RFSS{}, which are the foundation of our proofs regarding the sparsity of the SLTH. 

Let us start by introducing some notation. We denote by $[n]$ the set $\{1, \ldots, n\}$, for $n \in \mathbb{N}$. Given a set $\Omega=\left\{ X_{1},...,X_{n}\right\} $ and a set of
indices $S\subseteq\left[n\right]$ we define $\Sigma_{S}^{\Omega}=\sum_{i\in S}X_{i}$,
and we omit $\Omega$ when clear from the context.
We now define a class of distributions for which our \RFSS{} result holds.

\begin{definition}[\almostGaussian{}]
\label{def:quasi_unif}
    We say that a probability density function
    $f$ is \emph{\almostGaussian{}} if there exist positive constants $c_{l}$
    and $c_{u}$ such that, for all $k\in\mathbb{N}$,
    given $k$ independent samples $X_{1},...,X_{k}$ with density $f$,
    the density of their sum $f_{\Sigma_{\left[k\right]}}$ satisfies 
    \[
        \frac{c_{l}}{\sqrt{k}}\leq f_{\Sigma_{\left[k\right]}}\left(x\right)\leq\frac{c_{u}}{\sqrt{k}},
    \]
    with the lower bound holding for all $x\in\left[-\sqrt{k},\sqrt{k}\right]$ and the upper bound holding for all $x\in\mathbb{R}$.
\end{definition}
At first, our definition of \almostGaussian{} could look as a weaker version of a classical local limit theorem on the sum of random variables (e.g., see \cite[Chapter VII, Theorem 7]{petrov1975sums}). 
However, that is not the case, since we require a lower bound on the sum for any $k$, which is needed to prove our main result.

 Denote, for all $x\in[0,1]$, the binary entropy as \[H_2(x)=-x\log_2 x-(1-x)\log_2(1-x).\]
 Our main technical result is the following proof of a fixed-size subset variant of the RSS Problem.

\begin{theorem}
    \label{thm:srss}
    Let $0<\varepsilon<1$, $c_{\text{hyp}}\ge 1$, $k,n$ be integers
    with $1\leq k\leq \frac{n}{2}$, and let $\Omega=\left\{ X_{1},...,X_{n}\right\} $
    where the $X_{i}$'s are i.i.d. random variables with \almostGaussian{}
    density. There exists a constant $c_{\text{thm}}$
    such that, if 
    \begin{equation}
        \label{eq:bound_n}
        n\geq c_{\text{hyp}}\frac{\log_{2}\frac{k}{\varepsilon}}{H_{2}\left(\frac{k}{n}\right)},
    \end{equation}
    then for every fixed $z\in\left[-\sqrt{k},\sqrt{k}\right]$ 
    it holds that
    \[
        \Pr\left(\exists S\subset\left[n\right],\left|S\right|=k:\left|\Sigma_{S}-z\right|<\epsilon\right)\geq c_{\text{thm}}.
    \]
\end{theorem}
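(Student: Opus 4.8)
The plan is to run the second moment method on the count of ``good'' $k$-subsets. Fix $z\in[-\sqrt k,\sqrt k]$, write $I=(z-\epsilon,z+\epsilon)$, and let $N$ be the number of sets $S\subseteq[n]$ with $|S|=k$ and $\Sigma_S\in I$. The event whose probability we must lower bound is exactly $\{N\ge1\}$, and the Cauchy--Schwarz/second moment inequality gives $\Pr(N\ge1)\ge(\E N)^2/\E{N^2}$, so it suffices to prove $\E{N^2}\le C(\E N)^2$ for a constant $C$ depending only on $c_l,c_u$, and then take $c_{\text{thm}}=1/C$.

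\textbf{First moment.} Since $z\in[-\sqrt k,\sqrt k]$ and $\epsilon<1\le\sqrt k$, the interval $I$ meets $[-\sqrt k,\sqrt k]$ in a set of length at least $\epsilon$, on which the density of $\Sigma_{[k]}$ is at least $c_l/\sqrt k$ by \Cref{def:quasi_unif}; hence $\Pr(\Sigma_S\in I)\ge c_l\epsilon/\sqrt k$ for every fixed $k$-subset $S$, so $\E N\ge\binom nk\, c_l\epsilon/\sqrt k$. I would then combine the Stirling-sharp entropy estimate $\binom nk\ge 2^{nH_2(k/n)}/\sqrt{8k}$ (valid for $1\le k\le n/2$) with the hypothesis \eqref{eq:bound_n}, which forces $2^{nH_2(k/n)}\ge(k/\epsilon)^{c_{\text{hyp}}}\ge k/\epsilon$, to get $\binom nk\ge\sqrt k/(\sqrt8\,\epsilon)$ and therefore $\E N\ge c_l/\sqrt8=\Omega(1)$. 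This is the only step in which the lower bound on $n$ is used.

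\textbf{Second moment.} Next I would expand $\E{N^2}=\sum_{S,T}\Pr(\Sigma_S\in I,\Sigma_T\in I)$ over ordered pairs of $k$-subsets, grouped by $j=|S\cap T|$. The diagonal $j=k$ contributes exactly $\E N$. For $j<k$ there are $\binom nk\binom kj\binom{n-k}{k-j}$ pairs, and for each such pair I would decompose $\Sigma_S=\Sigma_{S\cap T}+\Sigma_{S\setminus T}$ and $\Sigma_T=\Sigma_{S\cap T}+\Sigma_{T\setminus S}$ into three independent sums (disjoint index sets), condition on the shared part $\Sigma_{S\cap T}$, and apply the \emph{upper} bound of \Cref{def:quasi_unif} (which holds on all of $\mathbb R$) to each of $\Sigma_{S\setminus T}$ and $\Sigma_{T\setminus S}$, both sums of $k-j\ge1$ samples; this yields $\Pr(\Sigma_S\in I,\Sigma_T\in I)\le(2c_u\epsilon/\sqrt{k-j})^2=4c_u^2\epsilon^2/(k-j)$. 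Writing $\binom kj\binom{n-k}{k-j}=\binom nk h_j$, where $h_j$ is the hypergeometric pmf of $|S\cap T|$ (fixed $S$, uniformly random $T$), and using $(\E N)^2\ge\binom nk^2 c_l^2\epsilon^2/k$, the $j$-block is at most $\tfrac{4c_u^2}{c_l^2}\cdot\tfrac{k}{k-j}\,h_j\,(\E N)^2$. What remains is to show $\sum_{j=0}^{k-1}\tfrac{k}{k-j}h_j\le C_1$ for an absolute $C_1$: I would split the sum at a threshold $j^\ast$ (at $k/2$ when $n\ge4k$, at $3k/4$ otherwise), bound $\tfrac{k}{k-j}\le4$ on the lower part (where $\sum h_j\le1$), and on the upper part bound $\tfrac{k}{k-j}\le k$ while invoking Hoeffding's inequality for the hypergeometric distribution — $j^\ast$ exceeds the mean $k^2/n$ by at least $k/4$ — to get $\sum_{j>j^\ast}h_j\le e^{-k/8}$, hence $ke^{-k/8}\le3$. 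This gives $C_1=7$, so $\E{N^2}\le\E N+\tfrac{28c_u^2}{c_l^2}(\E N)^2\le\bigl(\tfrac{\sqrt8}{c_l}+\tfrac{28c_u^2}{c_l^2}\bigr)(\E N)^2$, using $\E N\ge c_l/\sqrt8$ a second time.

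\textbf{The main obstacle.} Two points are genuinely load-bearing. First, the first-moment bound needs a $\sqrt k$ rather than a $\operatorname{poly}(n)$ loss in the binomial estimate: the crude bound $\binom nk\ge 2^{nH_2(k/n)}/(n+1)$ does not suffice once $k=\Theta(n)$, so the Stirling-sharp entropy bound is essential. Second, one must control $\sum_j\tfrac{k}{k-j}h_j$ \emph{uniformly} in $(n,k)$, in particular near $k=n/2$, where one cannot simply argue that a random $k$-subset rarely meets $S$; this is exactly where the hypergeometric tail bound and the regime-dependent choice of threshold are needed. The remaining ingredients — the pairwise bound via conditioning on the shared part, and the final Paley--Zygmund step — are routine. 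As an alternative to the indicator count $N$, one could use a smoothed count $\sum_S w(\Sigma_S-z)$ with a bump weight $w$, in the style of Lueker's original second moment argument, but it is not needed here.
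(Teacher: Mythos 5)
Your proposal is correct and follows the paper's second-moment blueprint: both reduce to showing $\mathbb{E}[N^2]\le C(\mathbb{E}[N])^2$, both handle the $j=k$ (diagonal) term by combining the sharp binomial lower bound $\binom{n}{k}\ge 2^{nH_2(k/n)}/\sqrt{8k}$ with the hypothesis in Eq.~\ref{eq:bound_n}, and both control the off-diagonal by decomposing on the size $j$ of the overlap $|\tilde S\cap\tilde S'|$. The one real difference is the concentration tool used on the overlap: the paper fixes a threshold $\mu k$ and uses \emph{Chebyshev} to obtain $\Pr(|\tilde S\cap\tilde S'|\ge\mu k)\le c/\sqrt k$, which it then pairs with the weaker density estimate $c\epsilon$ on the near-diagonal range $[\mu k, k-1]$, while you invoke \emph{Hoeffding's inequality for the hypergeometric} to obtain an exponentially small tail, which lets you get away with the crude bound $k/(k-j)\le k$ there and a clean two-regime split instead of the paper's three-regime split. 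Both work; yours avoids the $c/\sqrt k$-versus-$c\epsilon$ balancing act and makes the final constant explicit, at the small cost of needing the (true but less universally cited) fact that Hoeffding's tail bound applies to hypergeometric sampling. Two minor remarks: the inequality $\Pr(N\ge1)\ge(\mathbb{E}[N])^2/\mathbb{E}[N^2]$ is Cauchy--Schwarz (the second-moment method), not really Paley--Zygmund; and when you bound $\Pr(\Sigma_{S\setminus T}\in I-\Sigma_{S\cap T})$ by $2c_u\epsilon/\sqrt{k-j}$ it is worth stating that the density upper bound of Definition~\ref{def:quasi_unif} holds for sums of any size $\ge 1$ (the definition is quantified over all $k\in\mathbb{N}$), since $|S\setminus T|=k-j$ can be as small as $1$.
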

\begin{remark}
The proof of Theorem \ref{thm:srss} is given in Section \ref{sec:rfss_proof}, and it actually holds for any $1\leq k\leq \lambda n$, for an arbitrary $\lambda\in[\nicefrac{1}{n},1)$. 
We state the theorem this way for readability and because we are primarily interested in high-sparsity settings (i.e., small size $k$ of the subsets), so considering values of $k \ge \frac{n}{2}$ does not add much to our analysis.
The same remark also holds for Corollary \ref{cor:SRSS_amp}.
\end{remark}

%
The \almostGaussian{} condition of Definition \ref{def:quasi_unif} is easily verified for distributions such as the Gaussian distribution. 
Previous SLTH results rely on a classical resampling argument by \cite[Corollary 3.3]{Lueker98}, which shows how RSS results for Uniform$[-1,1]$ independent random variables naturally extend to independent random variables that \emph{contains} a uniform distribution, in the sense that they can be expressed as the mixture of distributions one of which is Uniform$[-1,1]$ with constant probability.\footnote{The definition in \cite[Corollary 3.3]{Lueker98} is actually more general, since it concerns a different problem.}
The next lemma thus proves that the Uniform$[-1,1]$ distribution is \almostGaussian{}\footnote{We believe that Lemma \ref{lem:uniform} is known, but we could not find a reference.}.
A detailed proof is provided in Appendix \ref{apx:uniform_has_property}.
\begin{lemma}
    \label{lem:uniform}
    The Uniform$[-1,1]$ probability density function is \almostGaussian{}, i.e., given a set $\mathcal U_n = \{U_i\}_{i\in [n]}$ of i.i.d. variables $U_i$ with Uniform$[-1,1]$ probability density function, there exist constants $\Cmin$ and $\Cmax$ such that the probability density function $f(x,n)$ of the sum $\Sigma^{\mathcal U_n}_{[n]}$ of these variables, for all $n\in\mathbb{N}$,
    \begin{equation}
        \label{eq:uniform_property}
        \frac{\Cmin}{\sqrt{\indexn}} \leq \f(x,{\indexn}) \leq \frac{\Cmax}{\sqrt{\indexn}},
    \end{equation}
    with the lower bound holding for all $ x \in [-\sqrt{\indexn}, \sqrt{\indexn}]$, and the upper bound holding for all $x\in\mathbb{R}$.
\end{lemma}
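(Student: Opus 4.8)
The plan is to work directly with the density $f(\cdot,n)$ of $\Sun$, whose characteristic function is $\phi(t)^n$ with $\phi(t)=\sin t/t$, and to treat the two inequalities in \eqref{eq:uniform_property} separately. In each case I reduce the statement for general $n$ to an estimate that is \emph{uniform} in $n$, supplemented by a direct check of finitely many small values of $n$. For $n\ge 2$ the function $\phi^n$ is integrable, so Fourier inversion produces the continuous version $f(x,n)=\frac1{2\pi}\int_{\mathbb R}\cos(tx)\,\phi(t)^n\,dt$ (the imaginary part vanishes by oddness); for $n=1$ everything is immediate since $f(x,1)=\tfrac12\mathbf 1_{[-1,1]}(x)$.

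For the upper bound, from the inversion formula $f(x,n)\le\frac1{2\pi}\int_{\mathbb R}|\phi(t)|^n\,dt$ for every $x$, so it suffices to bound this integral. I would split at $|t|=\pi$: on $[-\pi,\pi]$ use the elementary inequality $\sin t/t\le e^{-t^2/6}$ (from $\sin t/t=\prod_{k\ge1}(1-t^2/(k^2\pi^2))$, $\log(1-u)\le -u$, and $\sum_k k^{-2}=\pi^2/6$), which gives $\int_{|t|\le\pi}|\phi(t)|^n\,dt\le\int_{\mathbb R}e^{-nt^2/6}\,dt=\sqrt{6\pi/n}$; on $|t|>\pi$ use $|\phi(t)|\le 1/|t|$, which gives $\int_{|t|>\pi}|\phi(t)|^n\,dt\le 2\pi^{1-n}/(n-1)$, exponentially small and in particular $O(n^{-1/2})$. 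This yields $f(x,n)\le \Cmax/\sqrt n$ for all $n\ge 2$ with an explicit $\Cmax$, and enlarging $\Cmax$ to at least $\tfrac12$ covers $n=1$.

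The lower bound is the point I expect to be the main obstacle, precisely because it must hold for \emph{every} $n$ rather than only asymptotically, so a plain local limit theorem does not suffice. I would prove a quantitative local CLT on the window $[-\sqrt n,\sqrt n]$. Writing $x=\sqrt n\,y$ with $|y|\le 1$ and substituting $t=s/\sqrt n$ gives $\sqrt n\,f(\sqrt n\,y,n)=\frac1{2\pi}\int_{\mathbb R}\cos(sy)\,\phi(s/\sqrt n)^n\,ds$, which I compare with the Gaussian integral $\frac1{2\pi}\int_{\mathbb R}\cos(sy)e^{-s^2/6}\,ds=\sqrt{3/(2\pi)}\,e^{-3y^2/2}$. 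Splitting the $s$-integral at $|s|=\pi\sqrt n$: on $|s|\le\pi\sqrt n$ the difference of integrands is dominated by $2e^{-s^2/6}$ (again $\phi(u)\le e^{-u^2/6}$ for $|u|\le\pi$) and tends to $0$ pointwise, so $\int_{|s|\le\pi\sqrt n}|\phi(s/\sqrt n)^n-e^{-s^2/6}|\,ds\to 0$ by dominated convergence; the tails $\int_{|s|>\pi\sqrt n}e^{-s^2/6}\,ds$ and $\int_{|s|>\pi\sqrt n}|\phi(s/\sqrt n)|^n\,ds$ (the latter $\le 4\pi^{1-n}/\sqrt n$) both tend to $0$. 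The crucial observation is that all three error terms are independent of $y$, hence $\sup_{|y|\le 1}\bigl|\sqrt n\,f(\sqrt n\,y,n)-\sqrt{3/(2\pi)}e^{-3y^2/2}\bigr|\to 0$, so there is $N_0$ with $\sqrt n\,f(\sqrt n\,y,n)\ge\tfrac12\sqrt{3/(2\pi)}\,e^{-3/2}$ for all $n\ge N_0$ and $|y|\le 1$.

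Finally I absorb the finitely many $n<N_0$: for each such $n\ge 2$, $f(\cdot,n)$ is continuous and strictly positive on the open interval $(-n,n)$ (the classical positivity of the Irwin–Hall density, provable by an easy convolution induction), and $\sqrt n<n$, so $f(\cdot,n)$ attains a strictly positive minimum $m_n$ on the compact set $[-\sqrt n,\sqrt n]\subset(-n,n)$; for $n=1$, $f(\cdot,1)\equiv\tfrac12$ there. Taking $\Cmin=\min\bigl(\{\tfrac12\sqrt{3/(2\pi)}e^{-3/2}\}\cup\{\sqrt n\,m_n:1\le n<N_0\}\bigr)$ completes \eqref{eq:uniform_property}. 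An alternative to the local-CLT step, which I would also mention, is to use that $f(\cdot,n)$ is symmetric and log-concave (a convolution of log-concave densities): by unimodality it suffices to bound $f(\cdot,n)$ at the single point $\sqrt n$, and after normalizing to unit variance one may invoke the universal exponential lower bound $g(y)\ge c_0e^{-C_0|y|}$ valid for every symmetric log-concave density $g$ of variance $1$; since $\sqrt n=\sqrt 3\,\sigma_n$ with $\sigma_n^2=n/3$, this gives $f(\sqrt n,n)\ge\sqrt 3\,c_0e^{-C_0\sqrt 3}/\sqrt n$, again up to handling $n=1$ by hand.
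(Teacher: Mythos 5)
Your proof is correct and takes a genuinely different route from the paper's. The paper's upper bound invokes a ready-made local limit theorem (a uniform local CLT with explicit error rate $O(n^{-1/2})$, attributed to Sahaidarova and cited from Petrov); your upper bound instead works directly from the Fourier inversion formula $f(x,n)=\frac{1}{2\pi}\int_{\mathbb{R}}\cos(tx)\,(\sin t/t)^n\,dt$ (valid for $n\ge 2$, since the characteristic function is then in $L^1$) and the elementary bound $\sin t/t\le e^{-t^2/6}$ on $[-\pi,\pi]$, giving a self-contained argument with the sharp leading constant $\sqrt{3/(2\pi)}$. The paper's lower bound goes through a different two-step route: it first establishes, by a fairly delicate convolution induction, that $f(\cdot,n)$ is nondecreasing on $[-n,0]$ and nonincreasing on $[0,n]$, and then combines a Berry--Esseen bound on the \emph{cumulative} distribution (Allasia's explicit constant) with this unimodality, via $\Pr(\sqrt n\le \Sigma\le 2\sqrt n)\le \sqrt n\,f(\sqrt n,n)$, handling $2\le n<18$ by direct inspection. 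Your lower bound instead reads off the density directly via the same Fourier inversion, proves $\sup_{|y|\le1}\lvert \sqrt n\,f(\sqrt n\,y,n)-\sqrt{3/(2\pi)}e^{-3y^2/2}\rvert\to0$ by dominated convergence (with a cutoff at $|s|=\pi\sqrt n$ and the tail estimate $\sqrt n\int_{|u|>\pi}|\phi(u)|^n\,du\le 4\pi^{1-n}/\sqrt n$), and absorbs the finitely many small $n$ by compactness and the strict positivity of the Irwin--Hall density on $(-n,n)$. The trade-off is that the paper's proof is quantitative (it yields an explicit cutoff $n\ge18$ and explicit constants, and as a by-product establishes unimodality of the Irwin--Hall density), whereas your DCT step is non-constructive and only asserts the existence of some $N_0$; on the other hand your argument avoids both the convolution-monotonicity lemma and the appeal to two external theorems, so it is shorter and more self-contained. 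One small caveat on your closing remark: the claimed universal bound $g(y)\ge c_0 e^{-C_0|y|}$ for \emph{every} symmetric log-concave density of unit variance is false as stated (take $g$ uniform on $[-\sqrt 3,\sqrt 3]$, which vanishes for $|y|>\sqrt 3$); a correct version requires restricting $|y|$ to a bounded window, which happens to suffice at $|y|=\sqrt 3$ but would need to be stated and justified more carefully. Since that is only a sketched alternative and your main lower-bound argument stands on its own, this does not affect the proof.
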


Finally, in our proofs on the Sparse SLTH in Section \ref{sec:slth},  we make use of the following corollary of Theorem \ref{thm:srss}, which ensures a uniform high probability of hitting any target $z\in[-\sqrt{k},\sqrt{k}]$, considering independent random variables that contain a uniform distribution.
\begin{corollary}
    \label{cor:SRSS_amp}
    Let $0<p\leq 1$ and $\epsilon \in (0,\nicefrac{1}{2})$ be constants, $k,n$ with $1\leq k\leq \frac{n}{2}$, 
    and let $\Omega=\left\{ X_{1},...,X_{n}\right\} $
    be i.i.d. random variables whose density is a mixture of a Uniform$([-1,1])$ with probability $p$, and some other density otherwise.
    There exists a positive constant $c_{\text{amp}}$ that only depends on $p$ 
    such that, if 
    \begin{equation}
    \label{eq:n_cond_amp}
        n\geq c_{\text{amp}}\frac{\log_{2}^2\frac{k}{\varepsilon}}{H_{2}\left(\frac{k}{n}\right)}, 
    \end{equation}
    then 
    \[
        \Pr\left(\forall z\in\left[-\sqrt{k},\sqrt{k}\right], \exists S_z\subset\left[n\right],\left|S_z\right|=k:\left|\Sigma_{S_z}-z\right|<\epsilon\right)\geq 1-\epsilon.
    \]
\end{corollary}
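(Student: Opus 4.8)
The plan is to derive the corollary from Theorem~\ref{thm:srss} in three moves: a resampling step in the spirit of \cite[Corollary~3.3]{Lueker98} to reduce to i.i.d.\ Uniform$[-1,1]$ variables, an independent-repetition step that boosts the constant success probability $c_{\text{thm}}$ of Theorem~\ref{thm:srss} to $1-\epsilon$ for a single fixed target, and a union bound over an $(\epsilon/2)$-net of $[-\sqrt{k},\sqrt{k}]$ followed by a triangle inequality to handle all targets at once. For the reduction, realize each $X_i$ as drawn from Uniform$[-1,1]$ with probability $p$ and from the other density otherwise, and let $B_i\in\{0,1\}$ record which component was used, so the $B_i$ are i.i.d.\ $\mathrm{Bernoulli}(p)$. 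Condition on $B=(B_1,\dots,B_n)$, put $\mathcal I=\{i:B_i=1\}$, and note that a Chernoff bound gives $|\mathcal I|\ge pn/2$ except with probability at most $e^{-pn/8}$, which is at most $\epsilon/2$ once $n$ obeys \eqref{eq:n_cond_amp} with $c_{\text{amp}}$ large enough in terms of $p$. Conditionally on $B$, the family $\{X_i\}_{i\in\mathcal I}$ is i.i.d.\ Uniform$[-1,1]$, hence \almostGaussian{} by Lemma~\ref{lem:uniform}; since every $k$-subset of $\mathcal I$ is a $k$-subset of $[n]$, it suffices to argue with the $n':=|\mathcal I|\ge pn/2$ uniform variables only.

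Fix a target $z\in[-\sqrt{k},\sqrt{k}]$ and split the $n'$ uniform variables into $t$ disjoint blocks, each of size $\tilde n:=\lfloor n'/t\rfloor\ge pn/(2t)-1$, where $t=\Theta(\log_2(k/\epsilon))$ is fixed below. We choose $c_{\text{amp}}$ large enough, in terms of $p$ and of the fixed $\lambda<1$ from the general form of Theorem~\ref{thm:srss} mentioned in the Remark, that \eqref{eq:n_cond_amp} forces both $k\le\lambda\tilde n$ and the hypothesis \eqref{eq:bound_n} of Theorem~\ref{thm:srss} with $n$ replaced by $\tilde n$ and $\epsilon$ replaced by $\epsilon/2$, namely $\tilde n\ge c_{\text{hyp}}\log_2(2k/\epsilon)/H_2(k/\tilde n)$. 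This is exactly where the squared logarithm enters: since $\tilde n=\Theta(n/t)=\Theta\!\big(n/\log_2(k/\epsilon)\big)$, the last display rearranges to $n\gtrsim t\cdot\log_2(k/\epsilon)/H_2(k/n)=\Theta\!\big(\log_2^2(k/\epsilon)/H_2(k/n)\big)$ after comparing $H_2(k/\tilde n)$ with $H_2(k/n)$ up to constants. Now apply Theorem~\ref{thm:srss} in each block with target $z$ and error $\epsilon/2$: each block independently contains some $S$ with $|S|=k$ and $|\Sigma_S-z|<\epsilon/2$ with probability at least $c_{\text{thm}}$ (independence holding because the blocks are disjoint sets of i.i.d.\ variables), so the probability that no block does is at most $(1-c_{\text{thm}})^{t}$.

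Let $\mathcal N\subset[-\sqrt{k},\sqrt{k}]$ be an $(\epsilon/2)$-net of size $O(\sqrt{k}/\epsilon)$. Taking $t=\lceil c\log_2(k/\epsilon)\rceil$ with $c=c(c_{\text{thm}})$ large enough yields $|\mathcal N|\,(1-c_{\text{thm}})^{t}\le\epsilon/2$, so, conditionally on $|\mathcal I|\ge pn/2$, a union bound over $\mathcal N$ gives with probability at least $1-\epsilon/2$ that every $z'\in\mathcal N$ admits $S_{z'}\subset\mathcal I$ with $|S_{z'}|=k$ and $|\Sigma_{S_{z'}}-z'|<\epsilon/2$. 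For an arbitrary $z\in[-\sqrt{k},\sqrt{k}]$, pick the nearest $z'\in\mathcal N$; then $|\Sigma_{S_{z'}}-z|\le|\Sigma_{S_{z'}}-z'|+|z'-z|<\epsilon$, so $S_z:=S_{z'}$ works. Together with the $\le\epsilon/2$ failure probability of the Chernoff step this gives a total failure probability at most $\epsilon$, proving the claim with $c_{\text{amp}}$ the constant fixed above, which depends only on $p$.

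The only genuinely delicate point is the bookkeeping in the second paragraph: one must verify that after discarding a $\tfrac{p}{2}$ fraction of the variables in the resampling step and splitting into $t=\Theta(\log_2(k/\epsilon))$ blocks, each block of size $\approx pn/(2t)$ still satisfies \eqref{eq:bound_n}, in the general form of Theorem~\ref{thm:srss} valid for $k\le\lambda n$. Concretely this means controlling $H_2(k/\tilde n)$ versus $H_2(k/n)$ for $\tilde n=\Theta\!\big(n/\log_2(k/\epsilon)\big)$, which is routine for $k=o\!\big(n/\log_2(k/\epsilon)\big)$ using $nH_2(k/n)=\log_2\binom{n}{k}+O(\log n)$ and $H_2(x)=\Theta(x\log_2(1/x))$ for $x\in(0,\tfrac12]$, but requires tuning the free parameter $\lambda$ (and enlarging $c_{\text{amp}}$ accordingly) to reach the full range $k\le n/2$; this is precisely why the exponent of the logarithm in \eqref{eq:n_cond_amp} is $2$ rather than the $1$ in \eqref{eq:bound_n}.
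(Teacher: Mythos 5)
Your proposal matches the paper's three-step proof in Appendix~D: a Hoeffding/Chernoff step to extract a constant fraction of Uniform$[-1,1]$ variables, an amplification step that runs Theorem~\ref{thm:srss} on $t=\Theta(\log_2(k/\epsilon))$ disjoint blocks to boost the single-target success probability from $c_{\text{thm}}$ to $1-(1-c_{\text{thm}})^t$, and a union bound over an $\epsilon$-net of $[-\sqrt k,\sqrt k]$ closed by a triangle inequality. Steps one and three line up with the paper essentially verbatim, and the source of the squared logarithm in~\eqref{eq:n_cond_amp} is identified correctly (it is $t$ blocks, not $\lambda$-tuning, that produces the square).

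The part that should be retracted is the claim that enlarging $c_{\text{amp}}$ and tuning $\lambda<1$ makes~\eqref{eq:n_cond_amp} force $k\le\lambda\tilde n$ for each block of size $\tilde n\approx pn/(2t)$. This is false near the top of the stated range $k\le n/2$: take $k=n/2$, which satisfies~\eqref{eq:n_cond_amp} for every large $n$, yet $k/\tilde n\approx t/p=\Theta(\log_2(n/\epsilon))\to\infty$, so for $n$ large a block is strictly smaller than $k$ and no $k$-subset of a block even exists. Since $\tilde n$ does not depend on $c_{\text{amp}}$ and $\lambda<1$ is bounded, no choice of constants repairs this; more generally the block argument only covers $k=o\bigl(n/\log_2(k/\epsilon)\bigr)$. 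For transparency: the paper's own Appendix~D has exactly the same unaddressed constraint (it takes $r$ disjoint subsamples of cardinality $n^*=c_{\text{hyp}}\log_2(k/\epsilon')/H_2(k/n)$ and applies Theorem~\ref{thm:srss} to each without checking $k\le\lambda n^*$), so you have reproduced the argument faithfully — but your write-up asserts the bookkeeping can be closed by tuning constants, and that specific assertion is the one that does not survive scrutiny.
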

\begin{proof}[Proof Idea.]
    The corollary follows from three arguments.
    First, by a standard sampling argument,
    we can assume that a constant fraction of the sample follows a Uniform$[-1,1]$ distribution.
    Secondly, by Lemma \ref{lem:uniform}, the uniform probability density function is \almostGaussian{}. We can thus apply Theorem~\ref{thm:srss}, which guarantees a success probability of $c_{\text{thm}}$ for approximating a given target.
    Finally, by a standard probability amplification argument and a union bound applied to Theorem \ref{thm:srss}, by paying an extra factor $\log_2(k/\epsilon)$ in Eq. \ref{eq:bound_n}, the constant $c_{\text{thm}}$ can be \futurenote{assumed?}assumed to be $1-\epsilon$, and the existence of a suitable subset $S_z$ holds simultaneously for all $z\in[-\sqrt{k},\sqrt{k}]$. Details are given in Appendix~\ref{apx:amplif}.
    %
\end{proof}

For $k$ big enough, we can get rid of the squared logarithmic dependency on $k$ in the right hand side of Equation \ref{eq:n_cond_amp}, as shown in the following Corollary, whose proof can be found in Appendix \ref{apx:amplif_simp}.

\begin{corollary}
    \label{cor:SRSS_amp_simp}
        Let $0<p\leq 1$ and $\epsilon \in (0,\nicefrac{1}{2})$ be constants, $k,n$ be integers with $1\leq k\leq \frac{n}{2}$
        and $k \geq 2 c_{\text{amp}} \left(\log_{2}^2 k + 2log_2{k} \cdot  log_2{\frac{1}{\varepsilon}}\right)$.
        Let $\Omega=\left\{ X_{1},...,X_{n}\right\} $
        be i.i.d. random variables whose density is a mixture of a Uniform$([-1,1])$ with probability $p$, and some other density otherwise.
        There exists a positive constant $c_{\text{amp}}$ that only depends on $p$ 
        such that, if 
        \begin{equation}
            \label{eq:n_cond_amp_simp}
            n\geq 2 c_{\text{amp}}\frac{\log_{2}^2\frac{1}{\varepsilon}}{H_{2}\left(\frac{k}{n}\right)}, 
        \end{equation}
        then 
        \[
            \Pr\left(\forall z\in\left[-\sqrt{k},\sqrt{k}\right], \exists S_z\subset\left[n\right],\left|S_z\right|=k:\left|\Sigma_{S_z}-z\right|<\epsilon\right)\geq 1-\epsilon.
        \]
    \end{corollary}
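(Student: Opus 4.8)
The plan is to derive Corollary~\ref{cor:SRSS_amp_simp} directly from Corollary~\ref{cor:SRSS_amp}, keeping the same constant $c_{\text{amp}}$: it suffices to show that the weaker lower bound on $n$ in Eq.~\eqref{eq:n_cond_amp_simp}, together with the extra assumption on $k$, already implies the hypothesis~\eqref{eq:n_cond_amp} of Corollary~\ref{cor:SRSS_amp}, i.e. that $n\,H_2(k/n)\geq c_{\text{amp}}\log_2^2(k/\varepsilon)$, and then to invoke that corollary verbatim (its conclusion is exactly the one we want).

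The first ingredient is the elementary splitting $\log_2^2(k/\varepsilon)=\log_2^2 k+2\log_2 k\,\log_2(1/\varepsilon)+\log_2^2(1/\varepsilon)$. Multiplying by $c_{\text{amp}}$ and using the standing hypothesis $k\geq 2c_{\text{amp}}\bigl(\log_2^2 k+2\log_2 k\,\log_2(1/\varepsilon)\bigr)$, the first two terms together contribute at most $k/2$, so that $c_{\text{amp}}\log_2^2(k/\varepsilon)\leq k/2+c_{\text{amp}}\log_2^2(1/\varepsilon)$.

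The second ingredient is a pair of lower bounds on $n\,H_2(k/n)$. From the elementary entropy inequality $H_2(x)\geq x\log_2(1/x)$ and the assumption $1\leq k\leq n/2$ we obtain $n\,H_2(k/n)\geq k\log_2(n/k)\geq k$; from Eq.~\eqref{eq:n_cond_amp_simp} we obtain $n\,H_2(k/n)\geq 2c_{\text{amp}}\log_2^2(1/\varepsilon)$. Combining the two via $\max\{a,b\}\geq (a+b)/2$ gives
\[
    n\,H_2(k/n)\ \geq\ \frac{k}{2}+c_{\text{amp}}\log_2^2(1/\varepsilon)\ \geq\ c_{\text{amp}}\log_2^2(k/\varepsilon),
\]
which is precisely the bound~\eqref{eq:n_cond_amp} required by Corollary~\ref{cor:SRSS_amp}. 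Applying that corollary then yields the claim.

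Since each step is a one-line inequality, I do not expect a genuine obstacle here; the only point needing care is that the hypothesis on $k$ has been calibrated exactly so that the cross term $2\log_2 k\,\log_2(1/\varepsilon)$ and the term $\log_2^2 k$ are absorbed into $k$, and that this $k$ is in turn always available because $k\leq n/2$ forces $n\,H_2(k/n)\geq k$. I would also check separately the degenerate case $k=1$ (where $\log_2 k=0$), in which the statement reduces to $n\,H_2(1/n)\geq c_{\text{amp}}\log_2^2(1/\varepsilon)$, which is immediate from~\eqref{eq:n_cond_amp_simp}.
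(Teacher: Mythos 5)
Your proposal is correct and takes essentially the same route as the paper: both reduce to Corollary~\ref{cor:SRSS_amp} by verifying its hypothesis~\eqref{eq:n_cond_amp}, using the entropy bound $H_2(k/n)\geq (k/n)\log_2(n/k)$ together with $k\leq n/2$ to get $n\,H_2(k/n)\geq k$, and then absorbing the $\log_2^2 k$ and cross terms via the standing hypothesis on $k$. Your $\max\{a,b\}\geq (a+b)/2$ step is a slightly more direct way of combining the two lower bounds than the paper's algebraic rearrangement of~\eqref{eq:n_cond_amp}, but the ingredients and the overall argument are the same.
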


As customary in conference versions of papers, our proofs adopt the convention of taking ceilings and floors as suitable for non integer fractional terms. This is done in the interest of the reader (and ours), and does not impact the results in any significant way.

\subsection{Proof of Theorem  \ref{thm:srss}}
\label{sec:rfss_proof}

\begin{proof}[Proof of Theorem  \ref{thm:srss}]
For simplicity, throughout the proof we will often use $c$
to denote any positive constant.
Let $\mathcal{S}_{k}=\{S\subset[n]\,|\,|S|=k\}$ and define, for a fixed $z\in[-\sqrt{k},\sqrt{k}]$,
\[
Y=Y(z)=\sum_{S\in\mathcal{S}_{k}}Z_{S}
\]
where $Z_{S}=Z_S(z)=\mathbf{1}_{\{\left|\Sigma_{S}-z\right|<\epsilon\}}$. 
Following \cite{Lueker82}, we exploit the second moment method for \RFSS, generalising it to arbitrary $k$.
\begin{equation}
    \label{eq:2ndmethod}
    \Pr\left(Y>0\right) \geq \frac{\left(\mathbb{E}\left[Y\right]\right)^{2}}{\mathbb{E}\left[Y^{2}\right]},
\end{equation}
it thus suffices to prove that 
\begin{equation}
\mathbb{E}\left[Y^{2}\right]\leq c\left(\mathbb{E}\left[Y\right]\right)^{2}.\label{eq:goal_with_expec}
\end{equation}

We first rewrite Eq. \ref{eq:2ndmethod} in a more convenient form.
Let $\tilde{S}$ and $\tilde{S}^{\prime}$ be two independently and
uniformly at random chosen subsets of $\left[n\right]$ of size $k$,
and denote $H_S(z)$ as the event that $\Sigma_S$ $\varepsilon$-approximates $z$, namely 
\[ 
    \Hz{S}=H_S(z)=\left\lbrace\left|\Sigma_{S}-z\right|<\epsilon\right\rbrace .
\]

We have 
\begin{equation}
\mathbb{E}[Y]=\sum_{S\in\mathcal{S}_{k}}\mathbb{E}[Z_{S}]=\sum_{S\in\mathcal{S}_{k}}\Pr(\Hz{S})=\binom{n}{k}\Pr\left(\Hz{\tilde{S}}\right)\label{eq:rewrite_expec}
\end{equation}
and 
\begin{align}
\mathbb{E}[Y^{2}] & =\mathbb{E}\left[\left(\sum_{S\in\mathcal{S}_{k}}Z_{S}\right)\left(\sum_{S'\in\mathcal{S}_{k}}Z_{S'}\right)\right]=\sum_{S,S'\in\mathcal{S}_{k}}\mathbb{E}\left[Z_{S}Z_{S'}\right]\nonumber \\
 & =\sum_{S,S'\in\mathcal{S}_{k}}\Pr\left(\Hz{S}\wedge \Hz{S^{\prime}}\right)=\binom{n}{k}^{2}\Pr\left(\Hz{\tilde{S}}\wedge \Hz{\tilde{S}^{\prime}}\right).\label{eq:rewrite_second_moment}
\end{align}

Using Eqs. \ref{eq:rewrite_expec} and \ref{eq:rewrite_second_moment}
we can rewrite the r.h.s. of Eq. \ref{eq:2ndmethod} as follows
\[
\frac{\left(\mathbb{E}\left[Y\right]\right)^{2}}{\mathbb{E}\left[Y^{2}\right]}=\frac{\left[\Pr\left(\Hz{\tilde{S}}\right)\right]^{2}}{\Pr\left(\Hz{\tilde{S}}\wedge \Hz{\tilde{S}^{\prime}}\right)}=\frac{\Pr\left(\Hz{\tilde{S}}\right)}{\Pr\left(\Hz{\tilde{S}^{\prime}}\,|\,\Hz{\tilde{S}}\right)}.
\]
Eq. \ref{eq:goal_with_expec} thus becomes 
\begin{equation}
\Pr\left(\Hz{\tilde{S}^{\prime}}\,|\,\Hz{\tilde{S}}\right)\leq c\Pr\left(\Hz{\tilde{S}}\right).\label{eq:goal_with_probs}
\end{equation}
 Let $I_{i}$ denote the event $\{|\tilde{S}\cap\tilde{S}^{\prime}|=i\}$
and $I_{a,b}$ the event $\bigcup_{a\leq i\leq b}I_{i}$. Fix $\mu\in(\lambda,1)$. By the law
of total probability and independence of $I_i$ and $\Hz{\tilde{S}}$, we rewrite the l.h.s. of Eq. \ref{eq:goal_with_probs}
as follows: 
\begin{align}
 & \Pr\left(\Hz{\tilde{S}^{\prime}}\,|\,\Hz{\tilde{S}}\right)\nonumber \\
 & = \Pr\left(\Hz{\tilde{S}^{\prime}}\wedge I_{k}\,|\,\Hz{\tilde{S}}\right)+\Pr\left(\Hz{\tilde{S}^{\prime}}\wedge I_{\mu k,k-1}\,|\,\Hz{\tilde{S}}\right)+\Pr\left(\Hz{\tilde{S}^{\prime}}\wedge I_{0,\mu k-1}\,|\,\Hz{\tilde{S}}\right)\nonumber \\
 & =\Pr\left(I_{k}\right)\cdot\Pr\left(\Hz{\tilde{S}^{\prime}}\,|\,\Hz{\tilde{S}},I_{k}\right)\label{eq:first_term}\\
 & \qquad+\Pr\left(I_{\mu k,k-1}\right)\cdot\Pr\left(\Hz{\tilde{S}^{\prime}}\,|\,\Hz{\tilde{S}},I_{\mu k,k-1}\right)\label{eq:second_term}\\
 & \qquad+\sum_{i=0}^{\mu k-1}\left(\Pr\left(I_{i}\right)\cdot\Pr\left(\Hz{\tilde{S}^{\prime}}\,|\,\Hz{\tilde{S}},I_{i}\right)\right).\label{eq:third_term}
\end{align}
To conclude the proof, it suffices to show that each addendum in Eqs.
\ref{eq:first_term}, \ref{eq:second_term} and \ref{eq:third_term}
are upper-bounded by some constant multiple of $\nicefrac{\epsilon}{\sqrt{k}}$, since the lower bound in Definition \ref{def:quasi_unif} ensures that
\begin{equation}\label{eq:make_prob_appear}
\frac{\epsilon}{\sqrt{k}}\le c\Pr\left(\Hz{\tilde{S}}\right).
\end{equation}

As for the first addendum (Eq. \ref{eq:first_term}), since $\Pr\left(\Hz{\tilde{S}^{\prime}}\,|\,\Hz{\tilde{S}},I_{k}\right)=1$,
then 
\begin{align}
& \Pr\left(I_{k}\right) \cdot \Pr\left(\Hz{\tilde{S}^{\prime}}\,|\,\Hz{\tilde{S}},I_{k}\right)  =\Pr\left(I_{k}\right)=\frac{1}{{n \choose k}}
 \stackrel{(a)}{\leq} \sqrt{\frac{8k(n-k)}{n}}2^{-nH_2\left(\frac{k}{n}\right)}\nonumber \\
 & 
 \stackrel{(b)}{\leq} \sqrt{\frac{8k(n-k)}{n}}2^{-c_{\text{hyp}}\log_2\frac{k}{\epsilon}}
 \stackrel{(c)}{\leq}2\sqrt{2}\frac{\epsilon}{\sqrt{k}},\label{eq:square_root_first_addendum}
\end{align}
where inequality $(a)$ in Eq. \ref{eq:square_root_first_addendum}
is a standard lower bound on ${n \choose k}$ holding for all $k\leq n-1$; 
in inequality $(b)$ in Eq. \ref{eq:square_root_first_addendum} we used Eq. \ref{eq:bound_n}, namely $nH_{2}\left(\frac{k}{n}\right)\geq c_{\text{hyp}}\log_{2}\frac{k}{\varepsilon}$;
in inequality $(c)$ in Eq. \ref{eq:square_root_first_addendum}
we used that $c_{\text{hyp}}\ge 1$. 

As for the second addendum (Eq. \ref{eq:second_term}), we next show
that 
\begin{equation}\label{eq:square_root_second_addendum}
\Pr\left(I_{\mu k,k-1}\right)\Pr\left(\Hz{\tilde{S}^{\prime}}\,|\,\Hz{\tilde{S}},I_{\mu k,k-1}\right)\leq c\frac{\epsilon}{\sqrt{k}}
\end{equation}
by proving that
\begin{equation}
\Pr\left(I_{\mu k,k-1}\right)\leq\frac{c}{\sqrt{k}}\label{eq:first_factor_second_term}
\end{equation}
 and 
\begin{equation}
\Pr\left(\Hz{\tilde{S}^{\prime}}\,|\,\Hz{\tilde{S}},I_{\mu k,k-1}\right)\leq c\varepsilon.\label{eq:second_factor_second_term}
\end{equation}
First, observe that  $I=|\tilde{S}\cap\tilde{S}^{\prime}|$ follows
a Hypergeometric$(n,k,k)$ distribution, thus by Chebyshev's inequality
\begin{align}
\Pr\left(I_{\mu k,k-1}\right) &\leq\Pr\left(I\geq\mu k\right)=\Pr\left(I-\frac{k^2}{n}\geq\mu k-\frac{k^2}{n}\right)
 \leq\frac{\Var{I}}{\mu^2k^{2}\left(1-\frac{k}{\mu n}\right)^2}
 \notag\\&\le c'\frac{\frac{k^{2}}{n}\frac{n-k}{n}\frac{n-k}{n-1}}{k^{2}}
 \leq\frac{c}{\sqrt{k}},\nonumber
\end{align}
having set $c'=\mu^2(1-\nicefrac{\lambda}{\mu})^2>0$, thus proving Eq. \ref{eq:first_factor_second_term}. 
The proof of Eq.~\ref{eq:second_factor_second_term} is given in Appendix \ref{apx:srss_proof_details}, concluding the proof of Eq. \ref{eq:square_root_second_addendum}.

As for the third addendum (Eq. \ref{eq:third_term}), in Appendix \ref{apx:srss_proof_details} we show that
\begin{align}
 \sum_{i=0}^{\mu k-1}\Pr\left(I_{i}\right)\cdot\Pr\left(\Hz{\tilde{S}^{\prime}}\,|\,\Hz{\tilde{S}},I_{i}\right)
 \le c\frac{\epsilon}{\sqrt{k}},\label{eq:square_root_third_addendum}
\end{align}

The three bounds on the addenda in Eqs. \ref{eq:first_term},
\ref{eq:second_term}, and \ref{eq:third_term} (Eqs.
\ref{eq:square_root_first_addendum}, \ref{eq:square_root_second_addendum},
and \ref{eq:square_root_third_addendum}, respectively), combined with Eq. \ref{eq:make_prob_appear}, conclude the proof. 
\end{proof}

\section{Sparse Strong Lottery Ticket Hypothesis (SSLTH)}
\label{sec:slth}

We now apply our results on the \RFSS{} problem to the SLTH and obtain guarantees on the sparsity of winning tickets for Dense Neural Networks (DNNs, Theorem~\ref{thm:pensia}) and Equivariant NNs (Theorem~\ref{thm:ferbach}).

The next theorem essentially interpolates between the two extremes of \cite{malachProvingLotteryTicket2020}[Theorem 2.1] (where $\gamma m = \Theta(m_t)$) and \cite{Pensia}[Theorem 1] (where $\gamma m = \Theta(m)$), where we recall that $m$ and $m_t$ represent the number of parameters of the overparameterized and the target networks, respectively, and $\gamma$ is the density of the winning ticket.

We use $\sigma(\cdot)$ to denote the ReLU activation function, i.e., $\sigma(x) = x\cdot {\bf 1}_{x\ge 0}$, and $\|\mathbf{W}\|$ to denote the spectral norm of the matrix $\mathbf{W}$. 
Let $\mathcal F$ to be a set of target ReLU neural networks $f : \mathbf{R}^{d_0}\to \mathbf{R}^{d_l}$ of depth $l$ such that
    \begin{align}
        \mathcal{F} = \{f: f(\mathbf{x})=\mathbf{W}_l \sigma(\mathbf{W}_{l-1}\dots\sigma(\mathbf{W}_1 \mathbf{x})),\,
        \forall i \,\,\,
        \mathbf{W}_i \in \mathbb{R}^{d_i \times d_{i-1}} 
        \text{ and }   \|\mathbf{W}_i\| \leq 1 \}
    \label{EqnFFamilyUp}
    \end{align}
For a given $f\in\mathcal{F}$, for all $i\in[\ell]$, let $\rho_i=\max\{\nicefrac{d_{i-1}}{d_i},\nicefrac{d_i}{d_{i-1}}\}$, and $\rho=\max_i\rho_i$. Then, recalling that $c_{\text{amp}}$ is the constant defined in Corollary~\ref{cor:SRSS_amp}, we have the following result. 
\begin{theorem}[SSLTH for DNNs]
    \label{thm:pensia}
    Let $g$ be a randomly initialized feed-forward $2\ell$-layer neural network, in which each weight is drawn from a Uniform$[-1,1]$ distribution, of the following form:
    \[g({\bf x})=\mathbf{M}_{2l} \sigma(\mathbf{M}_{2l-1}\dots\sigma( \mathbf{M}_1 \mathbf{x})).\]    
    Let $\gammastar=\gammastar(
    \epsilon) \in (0,1)$,   
    $\mathbf{M}_{2i} \in \mathbb{R}^{d_i \times 2 d_{i-1}n_i^*}$ and $\mathbf{M}_{2i-1} \in \mathbb{R}^{2 d_{i-1}n_i^* \times d_{i-1}}$, with $n_i^*$ satisfying
    \begin{equation}\label{n^*}
        n_i^*= c_{\text{amp}}\frac{\log_{2}^2\left(\frac{ 2\ell d_{i-1}d_{i} \gammastar n^*_i}{\varepsilon}\right)}{H_{2}(\gammastar)}.
    \end{equation}
    With probability at least $1-\epsilon$,
    for every $f \in \mathcal F$, where $\mathcal F$ is defined as in Eq. \ref{EqnFFamilyUp},
    $g$ can be pruned to obtain a subnetwork of sparsity at least $\alpha = 1 - \gamma$ that approximates $f$ up to an error $\epsilon$, having defined $\gamma=\rho \gammastar$.

\end{theorem}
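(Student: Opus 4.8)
The plan is to follow the standard block-construction strategy from \cite{malachProvingLotteryTicket2020,Pensia}, replacing their core Random Subset Sum step with Corollary~\ref{cor:SRSS_amp}. First I would reduce the approximation of a depth-$\ell$ target network $f\in\mathcal F$ to the approximation of each individual linear-then-ReLU layer. Since the target layers satisfy $\|\mathbf W_i\|\le 1$ and $g$ is $2\ell$ layers deep, the idea is to use layers $2i-1$ and $2i$ of $g$ to implement the $i$-th target layer $\mathbf x \mapsto \sigma(\mathbf W_i \mathbf x)$; a standard Lipschitz/error-propagation argument (identical to the one in \cite{Pensia}) then shows that if each of the $\ell$ blocks approximates its target linear map uniformly to within $\epsilon'$ on the relevant input domain, the overall error is at most $\epsilon$ up to a factor depending only on $\ell$ and the spectral-norm bound $1$; absorbing this into the logarithm and rescaling $\epsilon'$ by the polynomial factor $2\ell d_{i-1}d_i$ appearing inside the log in Eq.~\eqref{n^*} handles it. This is why the argument of the logarithm in Eq.~\eqref{n^*} carries the factor $2\ell d_{i-1}d_i \gamma' n_i^*$.

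The heart of the construction is the single-neuron gadget: to produce one output coordinate of $\mathbf W_i \mathbf x$, namely a linear form $\sum_{j} (\mathbf W_i)_{rj} x_j$ with each coefficient in $[-1,1]$, I would use the intermediate layer of width $2 d_{i-1} n_i^*$ as follows. For each input coordinate $x_j$ ($j\in[d_{i-1}]$) allocate $2 n_i^*$ hidden neurons whose incoming weights from $x_j$ are the random Uniform$[-1,1]$ entries of $\mathbf M_{2i-1}$; using the ReLU identity $t = \sigma(t) - \sigma(-t)$, pair these up so that $n_i^*$ of them see $+$ the random weights and $n_i^*$ see $-$, giving access to $\pm$ (random coefficient) $\cdot x_j$ after the outgoing layer $\mathbf M_{2i}$. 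Then, to realize the coefficient $(\mathbf W_i)_{rj}$ we must select a subset of these $n_i^*$ random values (the outgoing weights, again Uniform$[-1,1]$, multiplied by the sign pattern) that sums to $(\mathbf W_i)_{rj}$ up to error $\epsilon'$ --- and crucially the subnetwork must have density $\gamma'$, i.e.\ exactly $k = \gamma' n_i^*$ of the $n_i^*$ weights retained. This is precisely the $\RFSS$ statement: Corollary~\ref{cor:SRSS_amp} with $p$ the probability mass that a mixture places on Uniform$[-1,1]$ (here $p=1$ since the weights are exactly uniform), sample size $n_i^*$, fixed subset size $k=\gamma' n_i^*$, and target accuracy $\epsilon'/(\text{poly})$, guarantees that \emph{simultaneously for all} targets $z\in[-\sqrt k,\sqrt k]\supseteq[-1,1]$ a good subset of size exactly $k$ exists, with probability $1-\epsilon'$. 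The condition Eq.~\eqref{n^*} is exactly Eq.~\eqref{eq:n_cond_amp} of that corollary with $k/n = \gamma'$, after the rescaling of $\epsilon$ described above. A union bound over the at most $2\ell d_{i-1} d_i$ scalar coefficients across all blocks (each failing with probability $\le \epsilon/(2\ell d_{i-1}d_i)$) gives the overall $1-\epsilon$ success probability.

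Finally I would verify the sparsity claim. Each realized block retains, out of its allotted random weights, a $\gamma' = k/n_i^*$ fraction in the intermediate-to-output layer (and correspondingly in the input-to-intermediate layer), so the block has density $\gamma'$ relative to the $n_i^*$-fold overparameterized width. Passing from ``density per block in the $n_i^*$ units'' to the density of the whole subnetwork relative to $g$ introduces the dimension-mismatch factor $\rho_i = \max\{d_{i-1}/d_i, d_i/d_{i-1}\}$, because the two layers of the block have unequal widths $d_{i-1}$ and $d_i$ before overparameterization; taking $\rho = \max_i \rho_i$ and $\gamma = \rho\gamma'$ yields a uniform density bound $\gamma$, hence sparsity $\alpha = 1-\gamma$. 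The main obstacle I expect is bookkeeping the error propagation and the rescaling of $\epsilon$ through the $\ell$ layers so that the polynomial blow-up lands exactly inside the logarithm of Eq.~\eqref{n^*} (so that the stated form of $n_i^*$ is self-consistent, note $n_i^*$ appears on both sides), together with carefully tracking how fixed-size (rather than ``size at most $k$'') subset selection forces the $\rho_i$ factor in the density --- this is the genuinely new point compared to \cite{Pensia}, where no sparsity was tracked at all.
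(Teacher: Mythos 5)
Your high-level plan matches the paper's proof: run the Pensia construction layer by layer, replace the unconstrained RSS step (\cite{Lueker98}, Corollary 2.5, appearing as Corollary~\ref{cor:Luek2.5}) with Corollary~\ref{cor:SRSS_amp} so that exactly $k=\gamma' n_i^*$ out of $n_i^*$ slots are used, and then do the edge-count bookkeeping to land on $\gamma=\rho\gamma'$. That is the right strategy and the $\rho_i$ intuition (unequal layer widths $d_{i-1}$ vs.\ $d_i$ in the rectangular blocks) is also correct, though the paper makes the arithmetic explicit, computing the surviving edge ratio $\frac{4 d_1 d_2 \gamma' n^*}{2 d_1^2 n^* + 2 d_1 d_2 n^*} = \frac{2 d_2 \gamma'}{d_1 + d_2} \le \rho_1 \gamma'$.

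However, there is a genuine gap in how you invoke Corollary~\ref{cor:SRSS_amp}. You write that the subset sum is taken over ``the outgoing weights, again Uniform$[-1,1]$, multiplied by the sign pattern,'' and you then apply the corollary with $p=1$ because ``the weights are exactly uniform.'' This is not correct. In the Pensia gadget, the quantities that are summed to approximate a target coefficient are the \emph{products} $M_{2i,rk}\cdot M_{2i-1,kj}$ (one incoming weight times one outgoing weight, passed through the $\sigma(t)-\sigma(-t)$ pair), not the raw outgoing weights; the random weights $M_{2i-1}$ cannot be ``paired $+/-$'' at will since they are part of the sample. After the sign-agreement preprocessing, the effective per-slot random variable is the distribution $D$ of Lemma~\ref{lem:unifprod}, which is a mixture containing a Uniform$[-\nicefrac{1}{2},\nicefrac{1}{2}]$ component only with probability $p=\ln(2)/4$, not $p=1$. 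Corollary~\ref{cor:SRSS_amp} must therefore be invoked with this non-trivial mixture parameter, and the constant $c_{\text{amp}}$ in Eq.~\eqref{n^*} depends on $p$; using $p=1$ would silently skip the preprocessing step and misstate the constant. This is the one piece you need to repair: explain why, after the ReLU pairing, you are doing \RFSS{} over i.i.d.\ copies of $D$ (rather than over raw uniforms) and that $D$ satisfies the mixture hypothesis of Corollary~\ref{cor:SRSS_amp} with $p=\ln(2)/4$.
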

%
\begin{proof}[Proof Idea.]
    The theorem follows from a slight variation of the same approach detailed in \cite{Pensia}, in which we use our Corollary \ref{cor:SRSS_amp}  
    instead of \cite{Lueker98}[Corollary 2.5] when pruning $g$, allowing us to have control over the size of the pruned network. 
    A detailed proof is provided in Appendix \ref{apx:new_pensia}.
\end{proof}

To illustrate a simple example of how Theorem \ref{thm:pensia} addresses the main question asked in the introduction, consider the case where we want to approximate a target network with $m_t$ parameters and $\ell$ layers, each of width $d$ (so $\rho=1$ and $\gamma'=\gamma$), by pruning an overparameterized network to achieve a desired sparsity level of $\alpha=1-\gamma$. 
The condition expressed by Equation \ref{n^*} in \Cref{thm:pensia} comes from the use of \Cref{cor:SRSS_amp} when pruning network $g$, as shown in the proof. If, instead of \Cref{cor:SRSS_amp}, we use its simplified variant \Cref{cor:SRSS_amp_simp}, it is easy to observe that Equation \ref{n^*} would become
\begin{equation}\label{n^*_simp}
        n_i^*= c_{\text{amp}}\frac{\log_{2}^2\left(\frac{ 2\ell d_{i-1}d_{i}}{\varepsilon}\right)}{H_{2}(\gammastar)}.
\end{equation}
Using this condition, Theorem \ref{thm:pensia} then tells us that we need to prune a randomly initialized network with twice as many layers and a number of parameters of the order of $d^2 \frac{log^2 \frac{\ell d^2}{\epsilon}}{H(\gamma)}$. 

We will now clarify the connection between Theorem \ref{thm:pensia} and the earlier results from \cite{malachProvingLotteryTicket2020} and \cite{Pensia}. Figure \ref{fig:plot-pensia-malach} provides a quick visual comparison.

\begin{figure}
    \centering
    \includegraphics[width=0.6\linewidth]{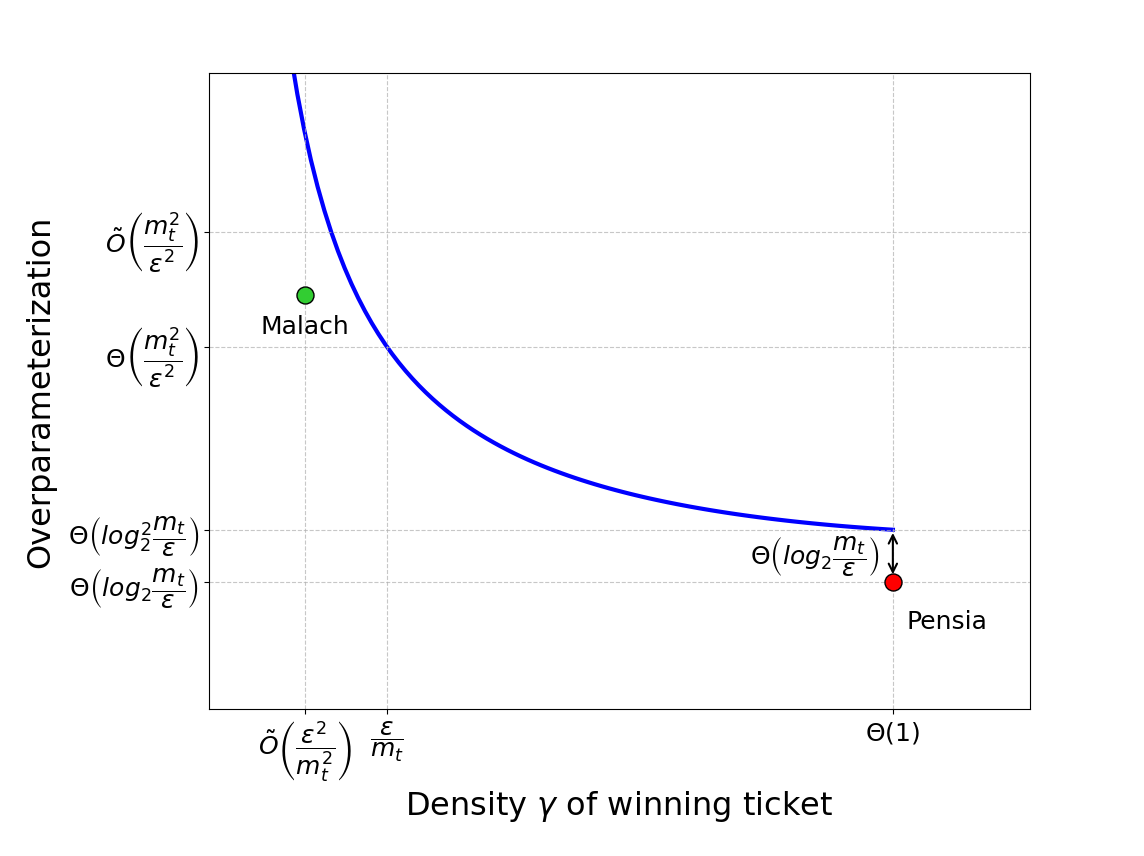}
    \caption{A qualitative plot showing the relationship between the density $\gamma$ of a winning ticket and the overparameterization required by Theorem \ref{thm:pensia} for a target network with $m_t$ parameters. Earlier results from Pensia et al. \cite{Pensia} and Malach et al. \cite{malachProvingLotteryTicket2020} are shown for comparison.}
    \label{fig:plot-pensia-malach}
\end{figure}

\paragraph{\normalfont{\textit{Malach et al.\cite{malachProvingLotteryTicket2020}.}}}
When all layers have the same width $d$, \cite{malachProvingLotteryTicket2020} showed that any target network with $l$ layers and a total of $m_t = d^2 l$ parameters can be $\epsilon$-approximated by pruning a randomly initialized network with $2l$ layers. The overparameterization of this network, relative to the target network, is $O\left(\frac{m_t^2}{\epsilon^2} \log_2 \frac{m_t}{\epsilon}\right) = \Tilde{O}\left(\frac{m_t^2}{\epsilon^2}\right)$. More specifically, the winning ticket found after pruning has a parameter count of the same order as the target network, resulting in a density of $\gamma = \Tilde{O}\left(\frac{\epsilon^2}{m_t^2}\right)$. Notably, this density $\gamma$ is the inverse of the overparameterization, as the size of the winning ticket matches that of the target network.

Next, we show that Theorem \ref{thm:pensia} also yields a density that is polynomial in $\frac{\epsilon}{m_t}$, when using an overparametrization of $\Theta\left(\frac{m_t^2}{\epsilon^2}\right)$. 
Let $z=\left(\frac{m_t}{\epsilon}\right)$, and note that $\gamma' = \gamma$ in Theorem \ref{thm:pensia}, since all layers have the same width.
As $n^*_i$ in Theorem \ref{thm:pensia} represents the overparametrization with respect to the target network, let us set $n^*_i = cz^2$, for some constant $c$.
Equation \ref{n^*} then becomes 
\begin{equation}
\label{eq:n^*_malach}
cz^2 \ge c_{\text{amp}} \frac{\log_2^2 (c z^3 \gamma)}{H(\gamma)} 
\end{equation}
We show that the inequality $cz^2 \ge c_{\text{amp}} \frac{\log_2^2 (c z^3 \gamma)}{\gamma \log_2(\nicefrac{1}{\gamma})}$ holds for some big enough constant $c$ when setting $\gamma = \frac{\epsilon}{m_t} = \frac{1}{z}$, which implies that Equation \ref{eq:n^*_malach} is also satisfied. 
We get $cz \ge c_{\text{amp}} \frac{\log_2^2 (c z^2)}{\log_2(z)}$, which is satisfied for a big enough constant $c$ (see Appendix \ref{apx:malach}).
Overall, when using an overparametrization $\Theta\left(\frac{m_t^2}{\epsilon^2}\right)$, we find a winning ticket with density $\frac{\epsilon}{m_t}$, as shown in Figure \ref{fig:plot-pensia-malach}.

\paragraph{\normalfont{\textit{Pensia et al.\cite{Pensia}.}}}
For simplicity, let us still consider target networks where all layers have the same width $d$, and we apply Theorem \ref{thm:pensia} using the simplified condition from Equation \ref{n^*_simp}.
When $\gamma m = \Theta(m)$, i.e. the density $\gamma$ is a constant as in \cite{Pensia} (see Appendix \ref{apx:pensia_lower}), the entropy term $H_{2}(\gammastar)$ in the right-side of Equation \ref{n^*_simp} also becomes a constant. In this setting, we indeed recover the result shown in \cite{Pensia}[Theorem 1], up to a logarithmic factor, as shown in Figure \ref{fig:plot-pensia-malach}.

Quite similarly to Theorem \ref{thm:pensia}, the next result essentially generalizes \cite{ferbachGeneralFrameworkProving2022} up to a factor $\log_2 \frac 1\epsilon$. The theorem is stated with the understanding that for $G$-equivariant networks, in order to preserve $G$-equivariance, pruning is best done not with respect to the parameters expressing the network in the canonical basis (i.e. directly on the weights of the network), but with respect to the \textit{equivariant parameters}, that is those coefficients expressing the linear layers of the network as a linear combination of the elements of the corresponding equivariant basis \cite{ferbachGeneralFrameworkProving2022}. For simplicity, due to the technical set-up, we assume all feature spaces being $\mathbb{F}=(\mathbb{R}^d,\sigma)$, with $\sigma$ the linear representation of the group $G$, and the same number $n$ of such feature spaces being stacked in each layer. A $G$-equivariant linear map from the $i$th feature space to the $i+1$st can be decomposed in a corresponding equivariant basis denoted $\mathcal{B}_{i\rightarrow i+1}=\mathcal{B}$. Since all feature spaces are the same, we omit the layers' indices. When stacking $n$ feature spaces in the input and output of the $i$th layer, the full equivariant basis is denoted $k_{n\rightarrow n}$, and finally the basis of the $G$-equivariant maps from $\mathbb{F}^n$ to $\mathbb{F}^n$ can be written as the Kronecker product $k_{n\rightarrow n}\otimes\mathcal{B}$. For any basis $\mathcal{B}=\{b_1,\ldots,b_p\}$, we denote its cardinality $p=|\mathcal{B}|$ and define $\Vert\mathcal{B}\Vert=\max_{\Vert\beta\Vert_\infty}\Vert\sum_{k=1}^p\beta_kb_k\Vert$, with $\Vert\cdot\Vert$ in the r.h.s. being the operator norm inherited from the $\ell_p$ norm.
\begin{theorem}[SSLTH for Equivariant Networks] 
\label{thm:ferbach} 
    Let $h$ be a random $2\ell$-layer $G$-equivariant network where all equivariant parameters are drawn from a Uniform$[-1,1]$ distribution, every odd layer 
    expressed in the associated equivariant basis $k_{\tilde{n}\rightarrow n}\otimes \mathcal{B}$ and every even layer 
    expressed in the associated equivariant basis $k_{n\rightarrow \tilde{n}}\otimes \mathcal{B}$. Let $\gamma=\gamma(\epsilon)\in(0,1)$, with $\tilde{n}$ satisfying
    \[
        \tilde{n}= c_{\text{amp}}\frac{\log_{2}^2\left(\frac{2\ell n^2\max\{|\mathcal{B}|,\Vert\mathcal{B}\Vert\}\gamma \tilde{n}}{\varepsilon}\right)}{H_{2}\left(\gamma\right)}.
    \]
    With probability at least $1-\epsilon$,
    for every $\ell$-layer $G$-equivariant neural network $f$, with all layers 
    expressed in the associated equivariant basis $k_{n\rightarrow n}\otimes \mathcal{B}$,
    $h$ can be pruned to obtain a $G$-equivariant subnetwork of sparsity at least $\alpha=1-\gamma$ that approximates $f$ up to an error $\epsilon$.
\end{theorem}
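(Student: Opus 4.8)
The plan is to follow the equivariant SLTH framework of \cite{ferbachGeneralFrameworkProving2022} essentially verbatim, substituting our Corollary~\ref{cor:SRSS_amp} for the classical RSS result of \cite{Lueker98} at the single point where the subset-sum approximation is invoked; this is the exact analogue of how Theorem~\ref{thm:pensia} upgrades \cite{Pensia}, and it is what buys control over the size of the pruned equivariant parametrization, hence over the sparsity. Concretely, I would first reduce the statement to a per-layer claim: it suffices to show that, with high probability, each consecutive pair of layers $\mathbf{M}_{2i-1},\mathbf{M}_{2i}$ of $h$ can be pruned to a $G$-equivariant block that approximates a single target $G$-equivariant layer $\mathbf{W}_i$ of $f$ (in operator norm) to within some $\delta$, while retaining only a $\gamma$-fraction of the equivariant parameters. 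Composing the $\ell$ blocks and propagating the error through the ReLUs is then routine: since ReLU is $1$-Lipschitz and the relevant operator norms are controlled (after the normalization used in \cite{ferbachGeneralFrameworkProving2022} that spreads a global constant across layers), a telescoping argument bounds the end-to-end error by $O(\ell)$ times the per-layer error, so it is enough to take $\delta$ of order $\epsilon/\mathrm{poly}(\ell,n,\|\mathcal{B}\|)$; this is the origin of the $\ell$ and $\log_2(1/\epsilon)$ factors in the expression for $\tilde n$.

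For the per-layer claim I would use the ReLU identity $x=\sigma(x)-\sigma(-x)$: after pruning, the odd layer acts as an equivariant ``signed broadcast'' of the incoming feature stack into the wide intermediate representation of width $\tilde n$, and the even layer reconstructs, independently for each of the at most $n^2|\mathcal{B}|$ equivariant coefficients of $\mathbf{W}_i$, a subset sum of the corresponding random Uniform$[-1,1]$ equivariant parameters. By Lemma~\ref{lem:uniform} the Uniform$[-1,1]$ density is \almostGaussian{}, so Corollary~\ref{cor:SRSS_amp} applies with $p=1$: after normalizing the target coefficients into $[-\sqrt{k},\sqrt{k}]$ with $k=\gamma\tilde n$, the corollary guarantees that, with probability $1-\epsilon'$, \emph{simultaneously for every} target value there is a subset of size exactly $k$ whose sum is $\epsilon'$-close to it. Two features of Corollary~\ref{cor:SRSS_amp} are essential here: the subsets have fixed size $k=\gamma\tilde n$, which is precisely what yields the claimed sparsity $\alpha=1-\gamma$; and the guarantee is uniform over all targets $z\in[-\sqrt{k},\sqrt{k}]$, so once the random network is fixed on the good event, the construction works for an arbitrary target $f$ without any union bound over the (uncountable) family of target networks.

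To turn this into the stated bound on $\tilde n$, I would account for the two remaining multiplicative losses. First, an error of size $\epsilon'$ in each equivariant coefficient of a layer inflates to an operator-norm error of at most $\epsilon'\,\|\mathcal{B}\|$ for the single-feature part, and a further factor $n$ for the stacked part $k_{n\to n}\otimes\mathcal{B}$, which is why $\max\{|\mathcal{B}|,\|\mathcal{B}\|\}$ and $n^2$ appear in the numerator of the bound on $\tilde n$. Second, a union bound over the at most $2\ell n^2|\mathcal{B}|$ coefficient slots across all layers keeps the total failure probability below $\epsilon$, at the cost of replacing $\epsilon'$ by $\epsilon/(2\ell n^2\max\{|\mathcal{B}|,\|\mathcal{B}\|\})$ in the argument of the logarithm. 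Plugging this $\epsilon'$ into the hypothesis in Eq.~\ref{eq:n_cond_amp} of Corollary~\ref{cor:SRSS_amp} with $k=\gamma\tilde n$ and $n=\tilde n$ then gives exactly the displayed condition on $\tilde n$ in the theorem.

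The main obstacle --- the only place where the argument genuinely departs from the dense case of Theorem~\ref{thm:pensia} and requires care rather than bookkeeping --- is to verify that the ``signed broadcast'' and the ``per-coefficient subset-sum reconstruction'' can in fact be realized as prunings \emph{of the equivariant parametrizations} $k_{\tilde n\to n}\otimes\mathcal{B}$ and $k_{n\to\tilde n}\otimes\mathcal{B}$, i.e.\ as sub-selections of equivariant parameters rather than arbitrary masks on the weight matrices, so that the pruned subnetwork remains $G$-equivariant. This is where the Kronecker-product structure of the equivariant bases must be exploited (the $k_{\cdot\to\cdot}$ factor governs how the copies of the feature space are mixed, and the $\mathcal{B}$ factor the single-feature map), and it is handled exactly as in \cite{ferbachGeneralFrameworkProving2022}; once it is in place, all the probabilistic content is supplied by Corollary~\ref{cor:SRSS_amp}, and the rest follows the template of \cite{Pensia} and Theorem~\ref{thm:pensia}. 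Full details are deferred to the appendix.
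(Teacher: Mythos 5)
Your proposal follows the same high-level route the paper sketches: the paper does not give a detailed proof of Theorem~\ref{thm:ferbach}, but states that it follows \cite{ferbachGeneralFrameworkProving2022} with Corollary~\ref{cor:SRSS_amp} substituted for \cite{Lueker98}[Corollary 2.5] inside Ferbach et al.'s Lemma~1, applied in parallel across non-overlapping equivariant coefficients — which is exactly your plan, including the observation that sparsity must be measured with respect to the equivariant parameter count and that the Kronecker structure of the basis is what preserves equivariance under pruning. The per-layer reduction, the Lipschitz composition, the union bound over $O(\ell n^2 |\mathcal{B}|)$ coefficient slots, and the identification of the numerator $2\ell n^2 \max\{|\mathcal{B}|,\|\mathcal{B}\|\}$ all match what the analogy with Theorem~\ref{thm:pensia} produces.

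One detail is imprecise: you assert that Corollary~\ref{cor:SRSS_amp} applies with $p=1$ because the equivariant parameters are Uniform$[-1,1]$. In the construction (mirroring Pensia's Lemma~1, cf.\ Lemma~\ref{lem:unifprod}), the subset-sum that approximates each target coefficient involves \emph{products} of an odd-layer parameter with an even-layer parameter surviving the ReLU preprocessing, and the resulting distribution is not uniform but a mixture containing Uniform$[-\nicefrac12,\nicefrac12]$ with probability $\ln(2)/4$. This only changes the constant $c_{\text{amp}}$ (which depends on $p$), not the form of the bound on $\tilde{n}$, but the $p=1$ claim and the wording ``subset sum of the corresponding random Uniform$[-1,1]$ equivariant parameters'' gloss over the ReLU preprocessing step that is the reason Corollary~\ref{cor:SRSS_amp} is stated for mixtures in the first place.
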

The proof, which we omit, is analogous to that of Theorem \ref{thm:pensia}, since \cite{ferbachGeneralFrameworkProving2022}[Theorem 1] exploits the exact same pruning strategy of \cite{Pensia}, except for the fact that it is applied not to the original parameters of the equivariant network, but to the network expressed in terms of its equivariant basis (the sparsity $\alpha$ is here also intended with respect to the equivariant parameters count). This allows the construction to apply without losing the property of equivariance in the pruned approximating subnetwork obtained. The crucial step is when Corollary \ref{cor:SRSS_amp} is applied in \cite{ferbachGeneralFrameworkProving2022}[Lemma 1], instead of \cite{Lueker98}[Corollary 2.5]. This is done in parallel, multiple times, across non-overlapping coefficients of the equivariant basis. Thanks to the careful preprocessing devised by the authors, this preserves equivariance and at the same time ensures that each application of Corollary \ref{cor:SRSS_amp} is independent of the others.

To conclude the section, we mention that Theorem \ref{thm:ferbach} applies in particular to vanilla CNNs, which are a special case of equivariant neural networks where the group is $G=(\mathbb{Z}^2,+)$, recovering previous SLTH results on CNN \cite{dacunhaProvingStrongLottery2022,burkholzConvolutionalResidualNetworks2022}. 
Furthermore, we remark that Theorem \ref{thm:pensia} can be revisited through the improvement upon the $2\ell$-depth overparameterization devised in \cite{burkholzMostActivationFunctions2022}, i.e., it is possible to provide sparsity guarantees also for overparameterizations requiring depth $\ell+1$ only. The analysis is more technical and we omit it, but the ideas are analogous to what shown in \cite{burkholzMostActivationFunctions2022}.
An analogous improvement is suggested as future work in \cite{ferbachGeneralFrameworkProving2022}.

\subsection{Lower bound on the required overparameterization}
\label{sec:LB}
We now adapt the lower bound of \cite{Pensia} in order to almost match the required overparameterization of our Theorem \ref{thm:pensia}, considering the simple scenario in which we want to approximate the family $\mathcal{F}$ of all linear networks with weights forming a matrix having spectral norm less than $\sqrt{k}$; more formally 
\begin{equation}
 \mathcal{F} := \{h_W : W \in \mathbb R^{d\times d}, \|W\| \leq \sqrt k\}, \quad \text  {   where   } \quad h_W(x) = W x.
\label{eq:LinearLower}
\end{equation}
The formal claim states that, if a network with $n$ parameters can approximate every $h_W \in \mathcal F$ with probability at least $\nicefrac{1}{2}$ (after it is pruned down to $k$ parameters), then the hypothesis of Theorem \ref{thm:srss} in Eq. \ref{eq:bound_n} must hold.\footnote{Equivalently, the hypothesis of Corollary \ref{cor:SRSS_amp} must hold up to a factor $\Theta(\log_{2}\frac{k}{\varepsilon})$.}
\begin{theorem}
    \label{thm:lowerBoundStrong}
    Let $n,k\in \mathbb N$, with $1\le k\leq \lambda n$, having set $\lambda=1-\nicefrac{1}{2\pi}\approx 0.84$. 
    Consider a neural network $g$ with $n$ parameters, and let $\mathcal{G}_k$ be the set of neural networks that can be formed by pruning $g$ down to $k$ parameters.
    Let $\mathcal{F}$ be as defined in Eq.~\ref{eq:LinearLower}. 
    If it holds that, for some $\epsilon<\nicefrac{1}{16}$,
    \begin{align}
    \forall {h_W\in \mathcal{F}} , \mathbb{P}\left( \exists g'\in \mathcal{G}_k: \max_{\mathbf{x}:\|x\|\leq 1} \|h_W(x)-g'(x)\| <\epsilon\right) \geq \frac{1}{2} ,
    \label{eq:lowerBdApprox}
    \end{align}
    then it holds that
    \[
        n\geq  \frac{d^2}{2} \frac{\log_{2}\frac{k}{\varepsilon}}{H_{2}\left(\frac{k}{n}\right)}.
    \]
\end{theorem}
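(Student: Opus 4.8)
The plan is to run a counting/packing argument: if pruning $g$ down to $k$ parameters suffices to $\epsilon$-approximate every $h_W \in \mathcal{F}$ with probability at least $\tfrac12$, then the set $\mathcal{G}_k$ of prunings — together with the freedom to place $k$ chosen weights into a $d\times d$ matrix — must be rich enough to form an $\epsilon$-net (in operator norm, on the unit ball) of the ball $\{W : \|W\|\le \sqrt k\}$ in $\mathbb{R}^{d\times d}$. Since that ball has, by a standard volumetric estimate, at least $(c\sqrt{k}/\epsilon)^{d^2}$ elements in any $\epsilon$-net, we get a lower bound on $|\mathcal{G}_k|$, hence on $n$.

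First I would make the probabilistic reduction precise. Cover the target ball $\{W:\|W\|\le\sqrt k\}$ by an $\epsilon$-separated set $\mathcal{W}$ of size at least $(\sqrt k / (c\epsilon))^{d^2}$ for a suitable absolute constant $c$ — this is the usual lower bound on packing numbers of a ball of radius $\sqrt k$ in a $d^2$-dimensional normed space, valid because $\epsilon < \sqrt k$ (which follows from $k\ge 1$ and $\epsilon<1/16$). For each $W\in\mathcal{W}$, the hypothesis \eqref{eq:lowerBdApprox} gives a pruning $g'$ with $\max_{\|x\|\le 1}\|h_W(x)-g'(x)\|<\epsilon$ with probability $\ge\tfrac12$; crucially, a pruned subnetwork $g'$ of a network realizing $h_W$ must itself be close to $h_W$, and for linear $f$ the realized linear map is within operator-norm $O(\epsilon)$ of $W$. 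Since any two distinct $W,W'\in\mathcal W$ are $\epsilon$-separated, the pruned approximators for them must differ. A simple averaging/union-bound argument over the random initialization of $g$ then shows there is a single realization of $g$ for which, simultaneously for a constant fraction of $W\in\mathcal W$, a valid pruning exists — in particular at least $\tfrac12 |\mathcal W|$ distinct elements of $\mathcal{G}_k$ are needed. (One must be slightly careful: the constant $\epsilon<1/16$ is there to leave room for absorbing the $O(\epsilon)$ slack and still keep the images distinct; I would track the constant $\tfrac12$ from the statement through the net parameter.)

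Next I would count $|\mathcal{G}_k|$ combinatorially. A pruning of $g$ down to $k$ parameters is determined by choosing which $k$ of the $n$ weights to retain (the retained values are then fixed by the realization, not free), so $|\mathcal{G}_k|\le \binom{n}{k}$. Combining with the lower bound from the previous step, $\binom nk \ge \tfrac12 (\sqrt k/(c\epsilon))^{d^2}$, and taking $\log_2$: using the standard bound $\binom nk \le 2^{nH_2(k/n)}$ gives $n H_2(k/n) \ge \tfrac{d^2}{2}\log_2(k/\epsilon) + O(d^2) - 1$, which after adjusting the absolute constant $c$ inside the net estimate (and using $k\ge 1$, $\epsilon<1/16$ to dominate the lower-order terms) yields $n \ge \tfrac{d^2}{2}\cdot \frac{\log_2(k/\epsilon)}{H_2(k/n)}$, as claimed. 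The constraint $k\le\lambda n$ with $\lambda = 1-\tfrac1{2\pi}$ is presumably what is needed for the volumetric constant to work out cleanly (the packing bound of the $\sqrt k$-ball competes against how much a single retained weight of the random network can vary, which is why $1/(2\pi)$ — the normalization of a Gaussian/uniform density — shows up).

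The main obstacle I expect is the reduction step: carefully arguing that a pruning $g'$ of a network $g$ which $\epsilon$-approximates $h_W$ over the unit ball forces the \emph{realized linear map} of $g'$ to be within $O(\epsilon)$ of $W$ in operator norm, and that the number of \emph{distinct realizable linear maps} obtained by pruning is bounded by $\binom nk$ rather than something larger. In particular I must ensure that different prunings that pick the same $k$ weights but would be placed in different matrix positions are not being double-counted as extra freedom — i.e., the architecture of $g$ fixes where each retained weight sits, so the only freedom is the choice of the $k$-subset. Nailing down this correspondence, and propagating the $\epsilon$-slack through the $2\ell$-layer-to-linear-map collapse without losing the constant $\tfrac12$ in the exponent, is the delicate part; the volumetric and entropy estimates are otherwise routine.
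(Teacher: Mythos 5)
Your proposal follows the same outline as the paper's proof: pack the $\sqrt{k}$-radius operator-norm ball of $d\times d$ matrices, show each pruned subnetwork can approximate at most one packing element, bound the number of prunings by $\binom{n}{k}$, and finish via the entropy bound on binomial coefficients. Two corrections. First, the packing should be $2\epsilon$-separated (not $\epsilon$-separated), so that a single $\epsilon$-approximator cannot serve two distinct targets; the volume comparison then yields $|\mathcal{P}_k|\ge(\sqrt{k}/(2\epsilon))^{d^2}$ with the explicit constant $2$, no undetermined $c$. Second, your guess about why $\lambda=1-\nicefrac{1}{2\pi}$ appears is wrong: it has nothing to do with Gaussian or uniform density normalizations. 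The paper uses the Stirling-type bound $\binom{n}{k}\le\sqrt{n/(2\pi k(n-k))}\,2^{nH_2(k/n)}$, and after rearranging needs $\tfrac{1}{2}\log_2\left(2\pi k(n-k)/n\right)\ge 0$, i.e.\ $k(1-k/n)\ge 1/(2\pi)$, which $1\le k\le\lambda n$ guarantees. Notably, your simpler bound $\binom{n}{k}\le 2^{nH_2(k/n)}$ is unconditional and also suffices: with it, $nH_2(k/n)>d^2\log_2\left(\sqrt{k}/(2\epsilon)\right)-1\ge\tfrac{d^2}{2}\log_2(k/\epsilon)$ closes using only $\epsilon<\nicefrac{1}{16}$ and $d\ge 1$, and the hypothesis on $\lambda$ plays no role in the arithmetic.
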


The theorem follows by adapting the packing argument of \cite{Pensia}.  
A detailed proof is provided in Appendix \ref{apx:lower}.

\section{Conclusions}
\label{sec:conclusions}

In this work, we have extended previous results on the Strong Lottery Ticket Hypothesis by quantifying the required overparameterization as a function of the sparsity of the subnetworks. 
Central to our results is a proof of the Random Fixed-size Subset Sum (RFSS) Problem, a refinement of the seminal Random Subset Sum (RSS) Problem in which the subsets have a required fixed size. 

A challenging open problem is to extend our analysis of RFSS to the multidimensional case, in which the random samples and targets are vectors in $\mathbb R^d$. 
Previous extension of RSS to the Multidimensional RSS have indeed allowed to prove structured-pruning version of the SLTH \cite{dacunhaProvingStrongLottery2022}. 
A Multidimensional RFSS result would then allow to quantify, in the structured pruning case, the dependency of the overparameterization w.r.t. the sparsity of the (structured) subnetworks. 

Another future direction is to refine our analysis of the RFSS in Theorem \ref{thm:srss} in order to improve the probability of success to $1-\epsilon$ rather than constant, thus allowing to avoid shaving off the extra factor $\log_2(1/\epsilon)$ in our corollaries w.r.t. our lower bound, which is due to the amplification done in Corollary \ref{cor:SRSS_amp} to get to probability $1-\epsilon$.

Finally, an important future direction is to improve  training-by-pruning methods such as \cite{zhouDeconstructingLotteryTickets2019,ramanujanWhatHiddenRandomly2020,fischerLotteryTicketsNonzero2022,fischerPlantSeekCan2022,otsukaPartialSearchFrozen2024a} or to develop new ones, in order to allow to efficiently find strong lottery tickets of a desired sparsity, thus empirically validating our theoretical predictions.  

\section{Limitations and Impact}
\label{sec:lim_impact}
\paragraph{Limitations} Similar to all the research conducted on the LTH and the SLTH, this work only proves the existence of lottery tickets. To this date, it is not clear if these subnetworks can be found reliably (no formal proof exists) in an efficient manner - however, empirical evidence suggests that efficient algorithms exist (e.g., \cite{zhouDeconstructingLotteryTickets2019,ramanujanWhatHiddenRandomly2020}).

\paragraph{Impact} The contribution of this work is primarily theoretical and not confined to a specific domain. Its potential societal impact would, therefore, be closely tied to the particular scenarios to which it is applied.
It could be interesting to compare the environmental impact of finding lottery tickets inside overparameterized networks. We also believe that our work has the potential to have a strong environmental impact as sparse NNs have  massively reduced inference costs.

\newpage
\begin{ack}
This research is supported by the EPSRC grant EP/W005573/1, and by the France 2030 program, managed by the French National Research Agency under grant agreements No. ANR-23-PECL-0003 and and ANR-22-PEFT-0002. It was also funded in part by the European Network of Excellence dAIEDGE under Grant Agreement Nr. 101120726, by SmartNet and LearnNet, and by the French government National Research Agency (ANR) through the UCA JEDI (ANR-15-IDEX-01), EUR DS4H (ANR-17-EURE-004), and the 3IA Côte d’Azur Investments in the Future project with the reference number ANR-19-P3IA-0002.  
\end{ack}

\printbibliography
\newpage
\appendix

\section{Lower Bound on the Ticket Size in \cite{Pensia}}
\label{apx:pensia_lower}


The claim is a direct consequence of the proof of \cite[Theorem 2]{Pensia} (Appendix B). 
There, in Step 3, it is shown that 
\[
    |\mathcal G| \geq \frac 12 \left(\frac 1{2\epsilon}\right)^{d^2},
\]
where $\mathcal G$ is the set of subnetworks that can be formed. 
Let $m$ be the number of parameters of the original network.
If we consider subnetworks of size at most $\gamma m$ ($0 \le \gamma \le 1$), we have\footnote{follows from the upper bound $\sum_{i=1}^{k} {n \choose i} \leq \left(\frac{en}{k}\right)^k$ on the partial sum of binomial coefficients.} 
\[
    |\mathcal G| \leq \sum_{i=1}^{\gamma m} {m \choose {\gamma m}} \leq 2^{\gamma m\log_2 (\frac{m}{\gamma m} e)}, 
\]
which combined with the previous inequality implies 
\[
    \gamma m \log_2 \left(\frac{e}{\gamma}\right) \geq  d^2 \log_2 \left(\frac 1{2\epsilon}\right) -1 
\]
If we have an overparameterized network of size \( m = \mathcal{O}(d^2 \log_2\left(\frac{1}{2\epsilon}\right)) \), as in \cite{Pensia}, we need \( \gamma m = \Theta(m) \) for the last inequality to be satisfied (note that $\log_2\left(\frac{e}{\gamma}\right) \le 1$, as $0 \le \gamma \le 1$).

\section{Visualizations}

\begin{figure}[ht!]
\centering
\includegraphics[width=\textwidth]{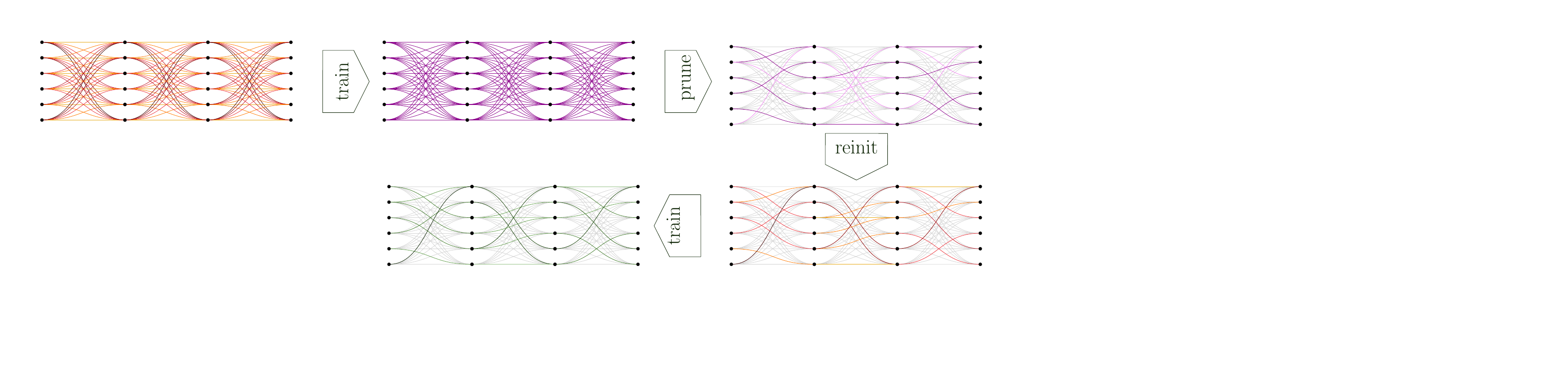}
\caption{\textbf{Simplified representation of the procedure for finding Lottery Tickets (LTH)}. A large random neural network (step 1) is trained by iterative  pruning with rewind: when the loss reaches a local minimum (step 2), some weights with smallest absolute value are pruned (step 3) and the value of the remaining edges is then reset to that of the initialization (step 4); finally, training is resumed and the final network is obtained (step 5). Remarkably, the sparser subnetwork is consistently able to reach a loss not larger than that right after pruning.}\label{fig:LTH}
\end{figure}

\begin{figure}[ht!]
\centering
\includegraphics[page=2,width=0.7\textwidth,trim={0 5cm  5cm 0},clip]{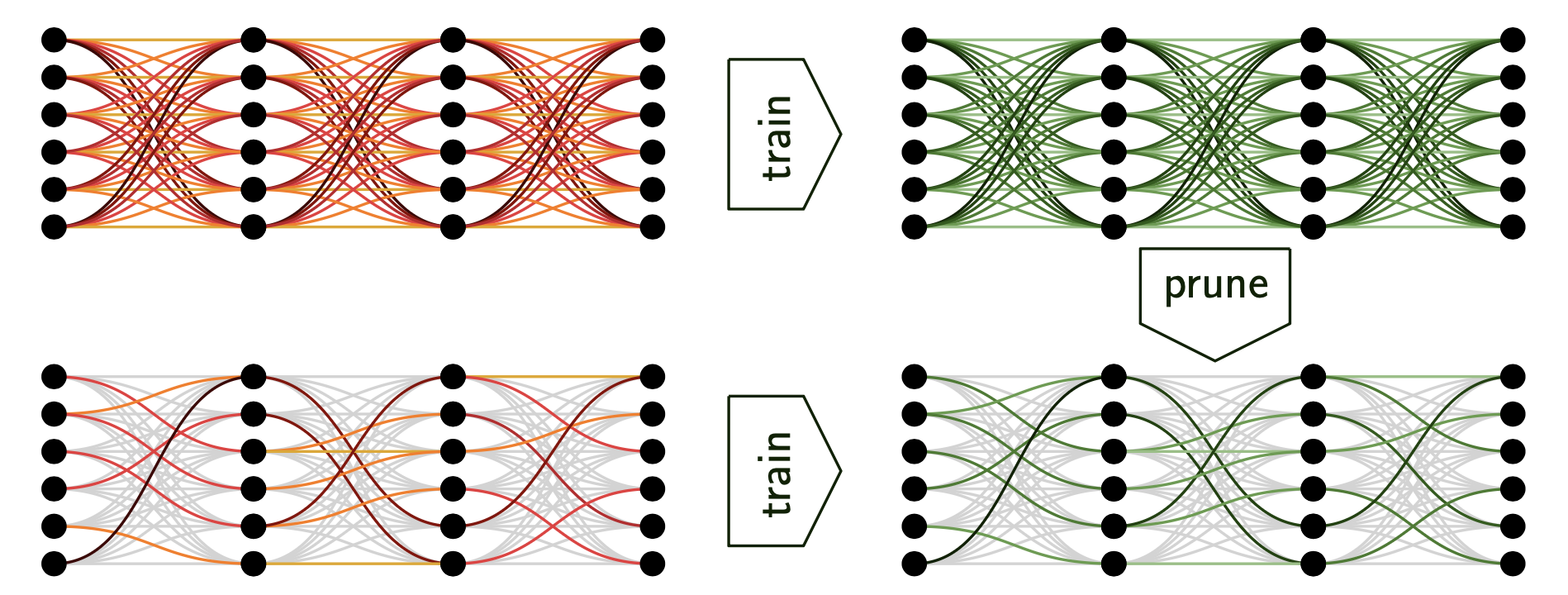}
\caption{
\textbf{Simplified representation of the procedure for finding Strongly Lottery Tickets (SLTH) / Training by pruning}. Previous work has shown that it is possible to sparsify large random neural network in order to obtain subnetworks that achieve good performance for a task under consideration, motivating the \emph{Strong Lottery Ticket Hypothesis}. No training is required.}\label{fig:truning}
\end{figure}

\section{Proof of Uniform$[-1,1]$ being \AlmostGaussian{}}
\label{apx:uniform_has_property}

In this section we provide a detailed proof of Lemma \ref{lem:uniform}, which states that the uniform distribution in $[-1,1]$ is \almostGaussian{}, as stated in Definition~\ref{def:quasi_unif}.
We remark that, while the proof is written for uniform random variables, it should be possible to extend it to a 
family of densities which are unimodal, with bounded variance, and bounded third moment.

\begin{proof}[Proof of Lemma \ref{lem:uniform}]
    Note first that the distribution of the sum of $n$ i.i.d. variables in $[0,1]$ is known as the Irwin–Hall distribution $\Irv$.\footnote{It should be known that $\Irv$ is unimodal  with a mode in $n/2$, but we were not able to find a reference.
    It is instructive to note, assuming that $\Irv$ is unimodal  with a mode in $n/2$, it directly follows that its probability density function is increasing on the interval $[0,n/2]$, and then decreasing over $[n/2,n]$. This implies that $\f(x,n)$ (the density of $\Sigma^{\mathcal U_n}_{[n]}$), is non decreasing in the interval $[-n,0]$, has maximum at $0$, and non increasing $[0,n]$ for all $n\ge 2$.} 
    We will use that $\variance[\Irv] = \frac n{12}$, where $\variance[X]$ denotes the variance of the random variable $X$. 
    
    For $n\geq 2$, $f(x,n)$ can be defined as the convolution of $f(x)=f(x,1)$ and $f(x,n-1)$, i.e.,
    \[
    f(x,n) = \int_{-\infty}^{+\infty} f(x-\tau,n-1)f(\tau) d\tau.
    \]
    It is straightforward to show, by induction and an elementary substitution in the integral above, which is relied upon in the inductive step, that $f(x,n)$ is symmetric about $0$, that is $f(x,n)=f(-x,n)$. 

    Let us now prove by induction that $\f(x,n)$ is nondecreasing on the interval $[-n,0]$ and nonincreasing over $[0,n]$ (for simplicity, since it vanishes outside $[-n,n]$, we can consider directly the negative half and positive half of the real line, respectively, in the argument that follows).
    
    The claims hold trivially for $f(x)$; also note that
    \[
    f(\tau) = \left\{ 
    \begin{array}{ll}
    \frac{1}{2} & \qquad \text{if} \quad -1 \leq \tau \leq 1\\
    0 & \qquad \text{otherwise}
    \end{array}\right. \qquad {\implies} \qquad 
    f(x,n) = \frac{1}{2}\int_{-1}^{+1} f(x-\tau,n-1) d\tau.
    \]

    If $x \leq x' \leq - 1$. Since $x-\tau\le x'-\tau\le 0$, by inductive hypothesis we have that $f(x-\tau,n-1) \leq f(x'-\tau,n-1)$ over the whole interval $\tau \in [-1,1]$. Taking integrals yields $f(x,n)\leq f(x',n)$. 
    
    Now, consider the case when $x \leq -1 \leq x' \leq 0$. 
    If $x+1 \leq -x'-1$, $x-\tau\le x+1\le-x'-1\le-x'+\tau\le -x+\tau$. By the symmetry about the origin, the inductive hypothesis is $f(x-\tau,n-1)=f(-x+\tau,n-1) \leq f(-x'+\tau,n-1)=f(x'-\tau,n-1)$ over the whole interval $\tau \in [-1,1]$, since $-1\le-x'+\tau\le-x+\tau$. Taking integrals yields $f(x,n)\leq f(x',n)$.
    Otherwise, there exists $\tau_0$ such that $x-\tau_0=-x'-1$, $x-\tau>-x'-1$ for all $\tau\in[-1,\tau_0)$ and $x-\tau<-x'-1$ for all $\tau\in(\tau_0,1]$. By symmetry, using $-x=x'+1-\tau_0$,
    $f(x-\tau,n-1) =f(-x+\tau,n-1)= f(x'+1+\tau-\tau_0,n-1)$. Thus, for all $\tau\in[-1,\tau_0]$, via the change of variable $\sigma=-(1+\tau-\tau_0)$ in the middle integral below, we obtain that
    \begin{equation}\label{eq:first_half_int}
        \int_{-1}^{\tau_0} f(x-\tau,n-1) d\tau = \int_{-1}^{\tau_0} f(x'+1+\tau-\tau_0,n-1) d\tau=\int_{-1}^{\tau_0} f(x'-\sigma,n-1) d\sigma.
    \end{equation}
    For all $\tau\in (\tau_0,1]$, $x-\tau<-x'-1\le-x'+\tau\le-x+\tau$, by symmetry about the origin we have that $f(x-\tau,n-1) \leq f(x'-\tau,n-1)$ by the inductive hypothesis with the same reasoning of the case $x+1 \leq -x'-1$. Taking integrals over the range $[\tau_0,1]$ for each term of the inductive hypothesis yields 
    \begin{equation}\label{eq:second_half_int}
    \int_{\tau_0}^1f(x-\tau,n-1)d\tau\le \int_{\tau_0}^1f(x'-\tau,n-1)d\tau
    \end{equation}
    Eqs. \ref{eq:first_half_int} and \ref{eq:second_half_int} imply that $f(x,n) \leq f(x'n)$. 
    
    Trivially, if $ -1 \leq x \leq x'\le 0$, analogous ideas are put in place as for the previous case, therefore we omit the details. We have thus shown the nondecreasing monotonicity of $f(x,n)$ on the negative half of the real line. By the symmetry of $f(x,n)$ about the origin, on the positive half of the real line the nondecreasing monotonicity turns into nonincreasing monotonicity, and the proof is complete.
    
    \nitbf{Lower bound (first inequality in Eq. \ref{eq:uniform_property}).} 
    The variance of $\Sigma^{\mathcal U_n}_{[n]}$ is $n/3$ since $\Sigma^{\mathcal U_n}_{[n]} = 2(\Irv(n)-n/2)$ and $\variance[\Irv(n)] = n/12$.  
    We define $\Zun = \frac{\Sun}{\sqrt{n/3}}$ and we note with $\Fn$ its cumulative distribution function. $\Zun$ has expectation 0 and standard deviation 1. 
    Consider the probability 
    \[
        \PL = \Pr(\sqrt{n} \leq \Sun \leq 2\sqrt{n}) 
        = \Pr(\sqrt 3 \leq \Zun \leq 2\sqrt 3).
    \]
    Now, we use the following form of Berry–Esseen inequality, discussed in \cite{marengo2017geometric}[p.2]).\footnote{It is also possible to obtain our result via classical Berry-Esseen inequality, due to the improved upper bound of $0.4748$ on the absolute constant, provided in \cite{Shevtsova11}. This would require replacing with $900$ the cut-off value for $n$, which is $18$ in the current version of the argument.}
    \begin{theorem}[Allasia~\cite{Allasia81}] 
    \label{thm:BE}
    For all $n \geq 1$, 
        \[|\Fn(z)-\Phi(z)| \leq \frac{\sqrt{3}}{20\sqrt{n}}, \]
    where $\Phi(z)$ is the cumulative distribution function of the standard normal distribution. 
    \end{theorem}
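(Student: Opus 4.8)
The plan is to prove this sharpened, distribution-specific Berry--Esseen estimate by the Fourier route, exploiting the symmetry of the Uniform$[-1,1]$ law, and to dispatch the few smallest values of $n$ by a direct estimate on the explicit Irwin--Hall distribution. Write $a=a(n)=\sqrt{3/n}$. Since the characteristic function of one Uniform$[-1,1]$ variable is $t\mapsto\sin t/t$ and $\Zun$ is a rescaled sum of $n$ independent copies, $\Zun$ has characteristic function $\psi_n(t)=\bigl(\sin(at)/(at)\bigr)^{n}$, which is real and even, while the standard Gaussian has characteristic function $e^{-t^2/2}$; note also that $na^2=3$.

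Since $\Fn$ is continuous for every $n\ge1$ (the sum of uniforms has a bounded density) and $t\mapsto\bigl(\psi_n(t)-e^{-t^2/2}\bigr)/|t|$ is integrable, I would use the Fourier-inversion form of Esseen's smoothing inequality,
\[
\sup_z\bigl|\Fn(z)-\Phi(z)\bigr|\;\le\;\frac{1}{2\pi}\int_{-\infty}^{\infty}\frac{\bigl|\psi_n(t)-e^{-t^2/2}\bigr|}{|t|}\,dt,
\]
and bound the integral by splitting the real line into a neighbourhood of the origin and its complement. Near $0$, expanding $\log\psi_n(t)=n\log\bigl(\sin(at)/(at)\bigr)=-\tfrac{t^2}{2}-\tfrac{t^4}{20n}+O\!\bigl(t^6/n^2\bigr)$ — here the \emph{symmetry} of the uniform law removes the $t^3/\sqrt n$ term that is present for generic distributions, which is exactly what permits a constant beating the universal Berry--Esseen bound, and the coefficient $\tfrac{1}{20}$ emerging here is the provenance of the $20$ in the statement — so that $\psi_n(t)-e^{-t^2/2}=-e^{-t^2/2}\bigl(\tfrac{t^4}{20n}+\cdots\bigr)$, and this part contributes $O(1/n)$ after integrating against $|t|^3e^{-t^2/2}$. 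On the complement I would use $|\sin(at)/(at)|\le\min\{1,1/|at|\}$, which gives $|\psi_n(t)|\le e^{-t^2/2}$ for $|t|\le\pi\sqrt{n/3}$ (from $\sin x/x\le e^{-x^2/6}$ on $[0,\pi]$ together with $na^2=3$) and $|\psi_n(t)|\le|at|^{-n}$ beyond, so that, using the $\pi/a$-periodic block structure of $|\sin(at)|$, the integral over $|t|\ge1$ is a geometrically small tail while $e^{-t^2/2}$ is negligible there. Combining the pieces yields a bound of the order $C/\sqrt n$ (in fact $O(1/n)$ once the tails are made small), with an explicit, small $C$.

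The main obstacle is pinning the constant down to exactly $\sqrt3/20$ uniformly over \emph{all} $n\ge1$. Asymptotically this is easy: because $\E{U^3}=0$ the first Edgeworth correction vanishes and $\sup_z|\Fn(z)-\Phi(z)|=O(1/n)$, which falls well below $\sqrt3/(20\sqrt n)$ once $n$ exceeds an explicit threshold, so the binding regime is \emph{small} $n$. For those finitely many values I would use that $\Sun$ (hence $\Zun$) has the rescaled Irwin--Hall density, a known piecewise polynomial of degree $n-1$, so $\Fn$ is an explicit piecewise polynomial whose distance from $\Phi$ is maximized over the finitely many breakpoints together with the solutions of $\Fn'=\Phi'$ in each piece; one checks, e.g. already at $n=1$, that the supremum is below $0.05<\sqrt3/20$, with comparable slack for the next few $n$. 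A more conceptual alternative valid across the whole range is to write the Irwin--Hall density as a cardinal B-spline and invoke the known sharp estimates on how closely cardinal B-splines approximate the Gaussian density, then integrate such a density bound over $(-\infty,z]$ to recover the CDF bound; this is in spirit Allasia's original argument. Either way, the large-$n$ Fourier estimate together with the small-$n$ verification gives the claim.
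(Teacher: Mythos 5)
The paper does not actually prove this statement: it is imported as a black box from Allasia's 1981 paper (via the presentation in Marengo--Farinola), and a footnote even remarks that the classical Berry--Esseen inequality with Shevtsova's constant $0.4748$ would serve equally well at the cost of raising the cutoff $n\ge 18$ to $n\ge 900$. So there is no in-paper proof to compare against; what you have written is an attempt to reprove the cited result from scratch.

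Your strategy is the standard and correct route to distribution-specific Berry--Esseen constants, and the key computations check out: $\psi_n(t)=\bigl(\sin(at)/(at)\bigr)^n$ with $a=\sqrt{3/n}$ is the right characteristic function, the expansion $\log\psi_n(t)=-t^2/2-t^4/(20n)+O(t^6/n^2)$ is correct (and the structural point that symmetry annihilates the $t^3/\sqrt{n}$ term, which is what permits beating the universal constant, is exactly right), and the tail bounds $|\sin x/x|\le e^{-x^2/6}$ on $[0,\pi]$ and $|\sin x/x|\le 1/|x|$ beyond are valid, so the smoothing integral is indeed $O(1/n)$ plus exponentially small tails. Two caveats. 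First, what you establish is $\sup_z|F_n(z)-\Phi(z)|=O(1/n)$ with an in-principle-explicit constant; you do not actually pin the bound down to $\sqrt{3}/(20\sqrt{n})$ uniformly over all $n\ge 1$. The explicit threshold and the finite small-$n$ verification are promised rather than performed, and for a precise numerical inequality like this one, that bookkeeping \emph{is} the content of the theorem. Second, your sample check at $n=1$ is numerically off: the Kolmogorov distance between Uniform$[-\sqrt{3},\sqrt{3}]$ and $N(0,1)$ is attained near $z\approx 0.80$ (where the two densities cross) and equals about $0.057$, not below $0.05$; it is still below $\sqrt{3}/20\approx 0.0866$, so your conclusion survives, but with less slack than you claim. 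Your closing remark that the B-spline route is ``in spirit Allasia's original argument'' is accurate, since the Irwin--Hall density is a cardinal B-spline; and for the paper's purposes any explicit constant would suffice, since the theorem is only used to bound $P_L(n)$ away from zero for $n\ge 18$.
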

    Theorem \ref{thm:BE} implies
    \[
    \PL \geq \Phi( 2\sqrt 3) - \Phi(\sqrt 3) - 2\cdot \frac{\sqrt{3}}{20\sqrt{n}}.
    \]
    When $n \geq 18$, 
    \[\Phi( 2\sqrt 3) - \Phi(\sqrt 3) - 2\cdot \frac{\sqrt{3}}{20\sqrt{n}} \geq \Phi( 2\sqrt 3) - \Phi(\sqrt 3) - 2\cdot \frac{\sqrt{3}}{20\sqrt{18}} = C_{18}> 0.
    \]
    That is 
    $\PL \geq C_{18} > 0$.
    When $2 \leq n < 18$, $\PL = \Fn(2\sqrt 3) - \Fn(\sqrt 3) = c_n > 0$. 
    We thus have
    \[
    \PL \geq \min\{C_i, \text{ for } 2 \leq i \leq 18\} = \Cminp > 0. 
    \]
    Recall that 
     $\PL = \Pr(\sqrt{n} \leq \Sun \leq 2\sqrt{n})$.
    As the density $\f(x,n)$ is decreasing on $\mathbb R^+$, we have 
    \[
    \PL \leq \f(\sqrt{n},n) \sqrt n. 
    \]
    Thus, 
    \[
    \f(\sqrt{n},n) \geq \frac{\PL}{\sqrt n}.  
    \]
    Since $\PL \geq \Cminp$ then for all $n \geq 2$
    \[
    \f(\sqrt{n},n) \geq \frac{\Cminp}{\sqrt n}. 
    \]
    When $n=1$, the density $f(1,1)=\frac 12$. So, by setting $\Cmin = \min(\Cminp,\frac 12)$, we get that, for all $n \geq 1$, for all $0\le x\le \sqrt{n}$:
    \[
    \f(x,n)\geq \f(\sqrt{n},n) \geq \frac{\Cmin}{\sqrt n}. 
    \]
    By a symmetric argument, we also have for all $ n \geq 1$, for all $-\sqrt{n}\le x\le 0$:
    \[
    \f(x,n)\geq \f(-\sqrt{n},n) \geq \frac{\Cmin}{\sqrt n}.  
    \]
    
    \nitbf{Upper bound (second inequality in Eq. \ref{eq:uniform_property}).} 
    Here, we bound the probability distribution function $\f(x,n)$ of $\Sun=\sqrt{\nicefrac{n}{3}} Z_n$, where we recall that $\Zun = \frac{\Sun}{\sqrt{\nicefrac{n}{3}}}$. Denoting $\fz$ the probability distribution function of $\Zun$, we have
    \[
    \fz(x,n) = \f\left(\sqrt{\frac{n}{3}}x,n\right)\sqrt{\frac{n}{3}}.
    \]
    We use the following local limit theorem, discussed in \cite{petrov1975sums}[p.214].
    \begin{theorem}[Sahaidarova~\cite{shakhaidarova1966uniform}]
    Let $\{X_n\}$ be a sequence of independent random variables with a
    common density $p(x)$, such that
    $E[|X_1|^3]<\infty$, $E[X_1]= 0$, $E[X^2_1]= 1$ and $\sup p(x) \leq C$.
    Let $p_n(x)$ be the density of the random variable $\frac 1{\sqrt{n}} \sum_{j=1}^n X_j$. Then
    \[
    \sup_x |p_n(x) - \phi(x)| \leq \frac{A\beta_3}{\sqrt{n}} \max(1,C^3),
    \]
    where $\phi$ is the probability distribution function of a standard gaussian, $A$ is an absolute constant, and $\beta_3 = E[|X_1|^3]$. \futurenote{Why not simply replace beta3?}
    \end{theorem}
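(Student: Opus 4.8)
The natural route to a local limit theorem of this type is Fourier inversion, and I would follow it. Write $f$ for the characteristic function of $X_1$. Since the density $p$ is bounded, $p\in L^1\cap L^\infty\subseteq L^2$, so $f\in L^2$ by Plancherel; as $|f|\le 1$ this gives $|f|^n\le|f|^2\in L^1$ for every $n\ge 2$, hence for $n\ge 2$ the density $p_n$ of $n^{-1/2}\sum_{j\le n}X_j$ is recovered by inversion. Subtracting the analogous formula for the standard Gaussian density $\phi(x)=\tfrac1{2\pi}\int e^{-itx}e^{-t^2/2}\,dt$ and taking the supremum over $x$ reduces the claim to the $x$-free estimate
\[
\sup_x\bigl|p_n(x)-\phi(x)\bigr|\ \le\ \frac1{2\pi}\int_{\mathbb R}\bigl|f(t/\sqrt n)^n-e^{-t^2/2}\bigr|\,dt .
\]
The small values of $n$ (including $n=1$, where the argument above does not apply) are disposed of directly: $\sup_x|p_n-\phi|\le C+\tfrac1{\sqrt{2\pi}}$, which is absorbed by $A\beta_3\max(1,C^3)/\sqrt n$ once $A$ is a large enough absolute constant, using $\beta_3\ge 1$ (Lyapunov) and $C\le\max(1,C^3)$. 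For the main argument I would split the integral at $|t|\le\sqrt n/\beta_3$ versus $|t|>\sqrt n/\beta_3$, and split the tail further into $|t|\le\sqrt n$ and $|t|>\sqrt n$.

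\emph{Central region} $|t|\le\sqrt n/\beta_3$. With $s=t/\sqrt n$, the hypotheses $E[X_1]=0$, $E[X_1^2]=1$, $\beta_3=E|X_1|^3<\infty$ give $f(s)=1-\tfrac{s^2}2+\theta(s)$ with $|\theta(s)|\le\tfrac{\beta_3}6|s|^3$, hence $|f(s)|\le 1-\tfrac{s^2}{3}\le e^{-s^2/3}$ on this range, and via $|u^n-v^n|\le n|u-v|\max(|u|,|v|)^{n-1}$ with $u=f(s)$, $v=e^{-s^2/2}$, a bound $|f(s)^n-e^{-ns^2/2}|\lesssim n\beta_3|s|^3e^{-(n-1)s^2/3}$. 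Re-substituting and integrating yields the main term $\lesssim\beta_3/\sqrt n$, plus the super-exponentially small Gaussian remainder $\int_{|t|>\sqrt n/\beta_3}e^{-t^2/2}\,dt$.

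\emph{Tail.} Here I would use the one genuinely non-routine ingredient, an explicit ``characteristic function bounded away from $1$'' lemma, which is precisely where the hypothesis $\sup p\le C$ enters. Writing $1-|f(s)|^2=\int(1-\cos(sy))\,\tilde p(y)\,dy$ with $\tilde p$ the density of the symmetrized variable $X_1-X_1'$ (so $\tilde p\le C$ and $\int y^2\tilde p=2$), one estimates the $\tilde p$-mass of the set where $1-\cos(sy)$ is small by covering the near-zeros $\{2\pi k/s\}$ by short intervals and using $\tilde p\le C$ together with Chebyshev's inequality, obtaining $1-|f(s)|^2\gtrsim\min(s^2,C^{-2})$ up to absolute constants. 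Two consequences follow. On $\sqrt n/\beta_3<|t|\le\sqrt n$ this gives $|f(s)|\le e^{-c\,s^2/C^2}$, and \emph{integrating the resulting Gaussian from the left endpoint} $s=1/\beta_3$ (not from $0$) produces a bound whose supremum over $n$, weighted against the $1/\sqrt n$ coming from the inversion substitution, is $\lesssim\beta_3\max(1,C^3)/\sqrt n$. On $|t|>\sqrt n$ one instead uses $|f(s)|^n\le\bigl(\sup_{|s|>1}|f(s)|\bigr)^{n-2}|f(s)|^2$ with $\sup_{|s|>1}|f(s)|\le 1-c/C^2$ and $\int|f|^2=2\pi\int p^2\le 2\pi C$, which decays faster than any power of $n$ and, re-weighted, is $\lesssim\max(1,C^3)/\sqrt n$. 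Summing the three contributions gives the theorem.

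\textbf{Main obstacle.} The central region is the classical Berry--Esseen computation and presents no difficulty. The work is in the tail: first, proving the quantitative lemma $1-|f(s)|^2\gtrsim\min(s^2,C^{-2})$ with a clean constant (a careless version loses the correct power of $C$); and second, the bookkeeping that keeps the final constant \emph{linear} in $\beta_3$, which hinges on integrating the moderate tail starting exactly from the cutoff $1/\beta_3$, so that the large-deviation factor of order $e^{-cn/(C^2\beta_3^2)}$ compensates the $\sqrt n$ from the inversion. Calibrating these cutoffs so as to land on $\max(1,C^3)$ rather than a larger power of $C$ is the delicate part of the argument.
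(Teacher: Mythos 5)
The paper does not prove this statement: it is imported verbatim as an external result (Petrov's monograph, p.~214, attributed to Sahaidarova), and is used only as a black box in the upper-bound half of Lemma~\ref{lem:uniform}. So there is no in-paper proof to compare against; I can only assess your argument on its own terms. Your Fourier-inversion route is the classical proof of such uniform local limit theorems and the structure is sound: the inversion formula is justified for $n\ge 2$ via $|f|^n\le|f|^2\in L^1$, the central region $|s|\le 1/\beta_3$ is the standard Berry--Esseen computation, and your tail bookkeeping does land on $\beta_3\max(1,C^3)/\sqrt n$ (the moderate tail contributes $\lesssim C^2\beta_3/\sqrt n$ after integrating from $s=1/\beta_3$, and the far tail contributes $\lesssim C^3/\sqrt n$ after optimizing $n\,e^{-cn/C^2}$ over $n$, both absorbed by $\max(1,C^3)$ since $\beta_3\ge1$).

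Three small corrections. First, your key lemma is misstated: the covering/Chebyshev argument gives $1-|f(s)|^2\gtrsim\min(s^2,1)/C^2$, not $\min(s^2,C^{-2})$; the clean lower bound $1-|f(s)|^2\gtrsim s^2$ without the $C^{-2}$ loss requires truncating $E[Y^2\mathbf 1_{|sY|>\pi}]$ via the third moment and hence only holds for $|s|\lesssim 1/\beta_3$, i.e.\ inside the central region. Since you only ever \emph{use} $|f(s)|\le e^{-cs^2/C^2}$ on $1/\beta_3<|s|\le1$ and $|f(s)|^2\le 1-c/C^2$ on $|s|>1$, this does not break anything, but the lemma should be stated in the weaker form. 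Second, $\sup_x p_n(x)\le\sqrt n\,C$ for $n\ge2$ (the rescaling by $\sqrt n$ costs a factor), not $C$; this is harmless because only $n=1$ escapes the Fourier argument and there $\sup p_1\le C$ holds exactly. Third, the Gaussian remainder $\int_{|t|>\sqrt n/\beta_3}e^{-t^2/2}\,dt$ is not super-exponentially small when $\beta_3\gg\sqrt n$, but it is still $\le 2\beta_3 e^{-n/(2\beta_3^2)}/\sqrt n\le 2\beta_3/\sqrt n$, which suffices. With these repairs the argument is complete.
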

    
    The theorem can be applied to a uniform continuous distribution with density $p^u(x) = \frac{1}{2\sqrt 3}$ in the interval $[-\sqrt{3},\sqrt{3}]$, which has mean $0$ and variance $1$. 
    We thus get, for every $x\in\mathbb{R}$,
    \[
        \fz(x,n)=p_{\indexn}^u(x) \leq \phi(0) + \frac{A\beta_3}{\sqrt{\indexn}} = \frac{1}{2\pi} + \frac{A\beta_3}{\sqrt{\indexn}} \leq \frac{1}{2\pi} + A\frac{3\sqrt 3}{4} = \Cmaxp.
    \]
    In conclusion, setting $\Cmax=\sqrt{3}\Cmaxp$, for every $x\in\mathbb{R}$ it holds that
    \[
        \f(x,n)=\sqrt{\frac{3}{n}}\fz\left(\sqrt{\frac{3}{n}}x,n\right) \leq \frac{\sqrt 3\Cmaxp}{\sqrt{n}} = \frac{\Cmax}{\sqrt{n}}.
    \] 
\end{proof}

\section{Proof of Corollary \ref{cor:SRSS_amp}}\label{apx:amplif}
\begin{proof}[Proof of   Corollary \ref{cor:SRSS_amp}]
As anticipated, we proceed in three steps.
\paragraph{Step 1: Hoeffding bound.} We start by showing, following the idea at the base of \cite[Corollary 3.3]{Lueker98}, that if $n'$ is large enough, a standard Hoeffding bound ensures that with high probability a constant fraction of the sample follows a Uniform$[-1,1]$ distribution. Since we assumed that every $X_i$ is a mixture of a Uniform$[-1,1]$ distribution with probability $p$, and another distribution with density $g$ (given by the factors $G_i$), we can rewrite $X_{i}=B_{i}\cdot U_{i}+(1-B_{i})\cdot G_{i}$, with $U_i$ being the uniform random variable, $G_i$ being the random variable with density $g$, $B_i$ being independent Bernoulli random variables with probability $p$. 

Fix $\alpha=\alpha(p)\neq p$, and assume, for now, that $n'$ satisfies Eq. \ref{eq:bound_n}, and therefore, since $\epsilon<\nicefrac{1}{2}$, choosing $c_{\text{hyp}}=c_{\text{hyp}}(p)\ge (\alpha-p)^{-2}$, ensures that, defining $\epsilon'=\nicefrac{\epsilon}{2}$, 
\[n'\ge c_{\text{hyp}}\log_2\frac{1}{\epsilon}\ge \frac{1}{2(\alpha-p)^{2}}\ln\frac{1}{\epsilon'}\] 
and therefore
\[
    \Pr\left(\sum_{i}^{n'}B_{i} \leq \alpha n'\right)\leq e^{-2(\alpha-p)^2n'}\le e^{-\ln\frac{1}{\epsilon'}} =\epsilon'.
\]
Thus 
\[\Pr\left(\sum_{i}^{n'}B_{i}>\alpha n'\right)\geq 1-\epsilon',\] that is, with high probability, there is a set of indices $I\subseteq\left[n'\right]$ of size $\left|I\right|\geq\alpha n'$, such that for each $i\in I$ it holds $B_{i}=1$, i.e. $X_{i}$ is uniformly distributed.

\paragraph{Step 2: Application of Theorem~\ref{thm:srss} via rejection-sampling.} Lemma \ref{lem:uniform} ensures that the uniform distribution of the $\left|I\right|$ random variables selected in \textit{Step 1} is \almostGaussian{}. Conditionally on the event $\{\sum_{i}^{n'}B_{i}>\alpha n'\}$, we can discard all random variables indexed outside $I$ and apply directly Theorem~\ref{thm:srss} to $\alpha n'$ of the remaining ones, for any fixed $k$ and $z\in[-\sqrt{k},\sqrt{k}]$, since $\alpha c_{\text{hyp}}\ge 1$ by construction. This guarantees a success probability of $c'_{\text{thm}}$ for approximating the given target $z$; thus, 
    \begin{align*} 
    &\Pr\left(\exists S_{z}\subset\left[n\right],\left|S_{z}\right|=k:\left|\Sigma_{S_{z}}-z\right|<\epsilon'\right) \ge \\&\Pr\left(\exists S_{z}\subset\left[n\right],\left|S_{z}\right|=k:\left|\Sigma_{S_{z}}-z\right|<\epsilon'\bigg\vert \sum_{i}^{n'}B_{i}>\alpha n'\right)\Pr\left(\sum_{i}^{n'}B_{i}>\alpha n'\right) \ge\\& \Pr\left(\exists S_{z}\subset I,\left|S_{z}\right|=k:\left|\Sigma_{S_{z}}-z\right|<\epsilon'\bigg\vert |I|>\alpha n'\right)(1-\epsilon')\ge c'_{\text{thm}}(1-\epsilon')\ge \frac{3}{4}c'_{\text{thm}}=c_{\text{thm}}.
    \end{align*}

\paragraph{Step 3: Amplification.} Finally, by a standard probability amplification argument and a union bound applied to Theorem \ref{thm:srss}, by paying an extra factor $\log_2(k/\epsilon)$ in Eq. \ref{eq:bound_n}, the constant $c_{\text{thm}}$ can be amplified to $1-\epsilon$, and the existence of a suitable subset $S_z$ holds simultaneously for all $z\in\left[-\sqrt{k},\sqrt{k}\right]$. We now give more details on this amplification.

Recall that $\epsilon' = \frac{\epsilon}{2}$, and let $c_{\text{amp}} =c_{\text{amp}}(p)= 8 \frac{c_{\text{hyp}}}{c_{\text{thm}}}$ and   $ r = \frac{4}{c_{\text{thm}}}\ln \frac{k}{\epsilon} $.
By assumption,  \[ n\geq 
c_{\text{amp}}\frac{\log_{2}^2\frac{k}{\varepsilon}}{H_{2}\left(\frac{k}{n}\right)} \geq 
2r c_{\text{hyp}}\frac{\log_{2}\frac{k}{\varepsilon}}{H_{2}\left(\frac{k}{n}\right)}
\geq r c_{\text{hyp}}\frac{\log_{2}\frac{k}{\varepsilon'}}{H_{2}\left(\frac{k}{n}\right)},\]
where the last inequality is ensured by $\epsilon<\nicefrac{1}{2}$. By \textit{Step 2}, we can apply Theorem~\ref{thm:srss}, with $\epsilon'$ and $n' \ge  c_{\text{hyp}}\frac{\log_{2}\frac{k}{\varepsilon'}}{H_{2}\left(\frac{k}{n}\right)} = n^*$, allowing us to prove that we can $\epsilon'$-approximate any  target $z$ with probability at  least $c_{\text{thm}}$. The probability of failing to approximate some given $z$ is then at most $1 - c_{\text{thm}}$. From the sample $\Omega$ of \almostGaussian{} random variables take $r$ subsamples (without replacement) of cardinality $n^*$ each, $\Omega_1,\ldots,\Omega_r$. The probability of failing to approximate some given $z$ with subsetsums from $\Omega$ is less than that of failing to approximate it with subsetsums from within every $\Omega_i$'s, and the latter probability is at most $(1-c_{\text{thm}})^r$; thus, for every $z\in[-\sqrt{k},\sqrt{k}]$,
    \[ \Pr\left(\nexists S_{z}\subset\left[n\right],\left|S_{z}\right|=k:\left|\Sigma_{S_{z}}-z\right|<\epsilon'\right) \leq (1-c_{\text{thm}})^r .\] 
    By an union bound, we also have that 
    \begin{align*}
         & \Pr\left(\forall z\in\left[-\sqrt{k},\sqrt{k}\right],\exists S_{z}\subset\left[n\right],\left|S_{z}\right|=k:\left|\Sigma_{S_{z}}-z\right|<\epsilon\right) \\
        & \ge \Pr\left(\forall z\in\left\{ -\sqrt{k}+i{\epsilon'}:i\in\left[\frac{2}{\epsilon'}\sqrt{k}\right]\right\},\exists S_{z}\subset\left[n\right],\left|S_{z}\right|=k:\left|\Sigma_{S_{z}}-z\right|<\epsilon',\right) \\
         & =1-\Pr\left( \exists z\in\left\{ -\sqrt{k}+i{\epsilon'}:i\in\left[\frac{2}{\epsilon'}\sqrt{k}\right]\right\} ,
         \nexists S_{z}\subset\left[n\right],\left|S_{z}\right|=k:\left|\Sigma_{S_{z}}-z\right|<\epsilon'\right) \\
         & \geq1-\sum_{z\in\left\{ -\sqrt{k}+i{\epsilon'}:i\in\left[\frac{2}{\epsilon'}\sqrt{k}\right]\right\} }\Pr\left(\nexists S_{z}\subset\left[n\right],\left|S_{z}\right|=k:\left|\Sigma_{S_{z}}-z\right|<\epsilon'\right) \\
         & \geq 1-\frac{2}{\epsilon'}\sqrt{k}\left(1-c_{\text{thm}}\right)^{r}=1-\frac{2}{\epsilon'}\sqrt{k}\exp\left(\frac{4}{c_{\text{thm}}}\ln \left(\frac{k}{\epsilon}\right)\cdot\ln(1-c_{\text{thm}})\right)
         \\
         &\geq 1-  \frac{2}{\epsilon'}\sqrt{k} \exp\left( - 4\ln \frac{k}{\epsilon} \right) = 1- \frac{2}{\epsilon'}\sqrt{k} \frac{\epsilon^4}{k^4} \geq 1-4\epsilon^3\ge 1-\epsilon,
    \end{align*}
    where the last inequality is ensured by $\epsilon<\nicefrac{1}{2}$. This completes the proof.
      
    
\end{proof}

\section{Proof of Corollary \ref{cor:SRSS_amp_simp}}\label{apx:amplif_simp}
   \begin{proof}[Proof of   Corollary \ref{cor:SRSS_amp_simp}]
    By definition of binary entropy, we have
    \begin{equation}
        H_{2}\left(\frac{k}{n}\right) = \frac{k}{n} \log_2\left(\frac{n}{k}\right) + \left(1-\frac{k}{n}\right) \log_2 \frac{n}{n-k}
    \end{equation}
    In particular, since both terms in the previous equation are positive, we get
    \begin{equation}
        H_{2}\left(\frac{k}{n}\right) \ge \frac{k}{n} \log_2\left(\frac{n}{k}\right) \label{eq:h_lb_v2}
    \end{equation}
    We now use \cref{eq:h_lb_v2} to derive an upper bound for the quantity $\frac{c_{\text{amp}}}{H_{2}\left(\frac{k}{n}\right)} \frac{\log_2^2{k}+2log_2{k} \cdot 
log_2{\nicefrac{1}{\varepsilon}}}{n}$, which will be used later:
    \begin{align}
    \frac{c_{\text{amp}}}{H_{2}\left(\frac{k}{n}\right)} \frac{\log_2^2{k}+2log_2{k} \cdot 
log_2{\frac{1}{\varepsilon}}}{n} &\le \frac{c_{\text{amp}}}{\frac{k}{n} \log_2\left(\frac{n}{k}\right)} \frac{\log_2^2{k}+2log_2{k} \cdot  log_2{\frac{1}{\varepsilon}}}{n} \nonumber \\
    &= c_{\text{amp}} \frac{\log_2^2{k}+2log_2{k} \cdot  log_2{\frac{1}{\varepsilon}}}{k} \frac{1}{\log_2\left(\frac{n}{k}\right)} \label{eq:step_k_bounded_v2} \\
    &\le c_{\text{amp}} \frac{\log_2^2{k}+2log_2{k} \cdot  log_2{\frac{1}{\varepsilon}}}{k} \label{eq:step_k_bounded2_v2} \\
    &\le \frac{1}{2}, \label{eq:ub1_v2}
    \end{align}
    where from \cref{eq:step_k_bounded_v2} to \cref{eq:step_k_bounded2_v2} we used that $log_2{\nicefrac{n}{k}} \ge 1$ for $k \le \nicefrac{n}{2}$, and then the hypothesis $k \geq 2 c_{\text{amp}} \left(\log_{2}^2 k + 2log_2{k} \cdot  log_2{\frac{1}{\varepsilon}}\right)$ directly gives \cref{eq:ub1_v2}.
    Let us now rewrite \cref{eq:n_cond_amp} in a more convenient form:
    \begin{align}
        &n \frac{H_{2}\left(\frac{k}{n}\right)}{c_{\text{amp}}} \ge \log_{2}^2\frac{k}{\varepsilon} \nonumber \\
        &n \frac{H_{2}\left(\frac{k}{n}\right)}{c_{\text{amp}}} \ge \log_{2}^2 k + 2log_2{k} \cdot  log_2{\frac{1}{\varepsilon}} + \log_{2}^2\frac{1}{\varepsilon} \nonumber \\
        &n \left(\frac{H_{2}\left(\frac{k}{n}\right)}{c_{\text{amp}}} - \frac{\log_{2}^2 k + 2log_2{k} \cdot  log_2{\frac{1}{\varepsilon}}}{n} \right) \ge \log_{2}^2\frac{1}{\varepsilon} \nonumber \\
        &n \left(1 - \frac{c_{\text{amp}}}{H_{2}\left(\frac{k}{n}\right)} \frac{\log_{2}^2 k + 2log_2{k} \cdot  log_2{\frac{1}{\varepsilon}}}{n} \right) \ge  \frac{c_{\text{amp}}}{H_{2}\left(\frac{k}{n}\right)} \log_{2}^2\frac{1}{\varepsilon} \\
        &n \ge \frac{c_{\text{amp}}}{\left(1 - \frac{c_{\text{amp}}}{H_{2}\left(\frac{k}{n}\right)} \frac{\log_{2}^2 k + 2log_2{k} \cdot  log_2{\frac{1}{\varepsilon}}}{n} \right)} \frac{\log_{2}^2\frac{1}{\varepsilon}}{H_{2}\left(\frac{k}{n}\right)} \label{eq:edit_bound_n_v2} 
    \end{align}
    Using \cref{eq:ub1_v2} we get
    \begin{align}
    \frac{c_{\text{amp}}}{\left(1 - \frac{c_{\text{amp}}}{H_{2}\left(\frac{k}{n}\right)} \frac{\log_2^2{k}+2log_2{k} \cdot 
log_2{\frac{1}{\varepsilon}}}{n} \right)} \le 2 c_{\text{amp}}
    \end{align}
    To satisfy \cref{eq:edit_bound_n_v2}, we can then choose $n$ such that
    \begin{align}
    n &\ge 2 c_{\text{amp}} \frac{\log_{2}^2\frac{1}{\varepsilon}}{H_{2}\left(\frac{k}{n}\right)},
    \end{align}
    and then apply \Cref{cor:SRSS_amp} to end the proof.
    \end{proof}

\section{Details for the proof of Theorem \ref{thm:srss}}
\label{apx:srss_proof_details}

\paragraph{Proof of Eq. \ref{eq:second_factor_second_term}}
Define $A=\tilde{S}^{\prime}\backslash\tilde{S}$,
and observe that
\begin{align}
 & \Pr\left(\Hz{\tilde{S}^{\prime}}\,|\,\Hz{\tilde{S}},I_{\mu k,k-1}\right)\label{eq:start_of_law_of_tot_prob}\\
 & =\sum_{i=\mu k}^{k-1}\Pr\left(\Hz{\tilde{S}^{\prime}}\,|\,\Hz{\tilde{S}},I_{i}\right)\Pr\left(I_{i}\,|\,\Hz{\tilde{S}},I_{\mu k,k-1}\right)\label{eq:first_law_of_tot_prob}\\
 & =\sum_{i=\mu k}^{k-1}\int_{-\infty}^{\infty}\Pr\left(\left|\Sigma_{A}-\left(z-y\right)\right|<\epsilon\,|\,\Sigma_{I}=y,I_{i},\Hz{\tilde{S}}\right)\Pr\left(\Sigma_{I}=y\,|\,\Hz{\tilde{S}},I_{i}\right)dy\notag\\
 & \qquad\cdot\Pr\left(I_{i}\,|\,\Hz{\tilde{S}},I_{\mu k,k-1}\right)\label{eq:second_law_of_tot_prob}\\
 & =\sum_{i=\mu k}^{k-1}\int_{-\infty}^{\infty}\Pr\left(\left|\Sigma_{A}-\left(z-y\right)\right|<\epsilon\,|\,\Sigma_{I}=y, I_{i}\right)\Pr\left(\Sigma_{I}=y\,|\,\Hz{\tilde{S}},I_{i}\right)dy\notag\\
 & \qquad\cdot\Pr\left(I_{i}\,|\,\Hz{\tilde{S}},I_{\mu k,k-1}\right)\label{eq:after_dropping_conditioning}\\
 & \leq c\varepsilon\sum_{i=\mu k}^{k-1}\int_{-\infty}^{\infty}\Pr\left(\Sigma_{I}=y\,|\,\Hz{\tilde{S}},I_{i}\right)dy\Pr\left(I_{i}\,|\,\Hz{\tilde{S}},I_{\mu k,k-1}\right)\label{eq:last_step}\\
 &\le c\varepsilon \nonumber
\end{align}
where from Eq. \ref{eq:start_of_law_of_tot_prob} to Eq. \ref{eq:first_law_of_tot_prob}
and from Eq. \ref{eq:first_law_of_tot_prob} to Eq. \ref{eq:second_law_of_tot_prob}
we used the law of total probability;\footnote{For simplicity, we denote the density of $\Sigma_{I}$ conditional on $\Hz{\tilde{S}}\cap I_{i}$ as $\Pr\left(\Sigma_{I}=y\,|\,\Hz{\tilde{S}},I_{i}\right)$.} from Eq. \ref{eq:second_law_of_tot_prob}
to Eq. \ref{eq:after_dropping_conditioning} we dropped the redundant
event $\Hz{\tilde{S}}$ in the conditioning, due to conditional independence;
finally, from Eq. \ref{eq:after_dropping_conditioning} to Eq. \ref{eq:last_step}
we used Definition \ref{def:quasi_unif} which implies that for any
$i\in\left\{ \mu k,...,k-1\right\} $ it holds
\begin{align*}
 & \Pr\left(\left|\Sigma_{A}-\left(z-y\right)\right|<\epsilon\,|\,\Sigma_{I}=y,I_{i}\right)=\Pr\left(\left|\Sigma_{\left[k-i\right]}-\left(z-y\right)\right|<\epsilon\right)\leq c\varepsilon.
\end{align*}

\paragraph{Proof of Eq. \ref{eq:square_root_third_addendum}}
Let $A=\tilde{S}^{\prime}\backslash\tilde{S}$.
Analogously to the calculations from Eq.~\ref{eq:first_law_of_tot_prob} to Eq.~\ref{eq:after_dropping_conditioning}, by the law of total probability we have
\begin{align}
 & \sum_{i=0}^{\mu k-1}\Pr\left(I_{i}\right)\cdot\Pr\left(\Hz{\tilde{S}^{\prime}}\,|\,\Hz{\tilde{S}},I_{i}\right)\nonumber \\
 & =\sum_{i=0}^{\mu k-1}\Pr\left(I_{i}\right)\cdot\int_{-\infty}^{\infty}\Pr\left(\left|\Sigma_{A}-\left(z-y\right)\right|<\epsilon\,|\,\Sigma_{I}=y,\,I_{i}\right)\Pr\left(\Sigma_{I}=y\,|\,\Hz{\tilde{S}},I_{i}\right)dy\nonumber \\
 & =\sum_{i=0}^{\mu k-1}\Pr\left(I_{i}\right)\cdot\int_{-\infty}^{\infty}\Pr\left(\left|\Sigma_{\left[k-i\right]}-\left(z-y\right)\right|<\epsilon\right)\Pr\left(\Sigma_{I}=y\,|\,\Hz{\tilde{S}},I_{i}\right)dy\label{eq:using_def_in_third_addendum}\\
 & \leq c\frac{\epsilon}{\sqrt{k}}\sum_{i=0}^{\mu k-1}\Pr\left(I_{i}\right)\cdot\int_{-\infty}^{\infty}\Pr\left(\Sigma_{I}=y\,|\,\Hz{\tilde{S}},I_{i}\right)dy\label{eq:after_using_def_in_second_addendum}\\
 &\le c\frac{\epsilon}{\sqrt{k}} \nonumber
\end{align}

where from Eq. \ref{eq:using_def_in_third_addendum} to Eq. \ref{eq:after_using_def_in_second_addendum}
we used Definition \ref{def:quasi_unif}, which implies that for any
$i\in\left\{ 0,...,\frac{9}{10}k-1\right\} $ it holds
\[
\Pr\left(\left|\Sigma_{\left[k-i\right]}-\left(z-y\right)\right|<\epsilon\right)\leq c'\frac{\epsilon}{\sqrt{k-i}}\leq c\frac{\epsilon}{\sqrt{k}}.
\]

\section{Proof of Theorem \ref{thm:pensia}}
\label{apx:new_pensia}

In the proof we will refer to the following results, upon which \cite{Pensia}[Theorem 1] relies (the statement below slightly differ as we fix two small typos in their notation and mixing coefficients). With the understanding that by a mixture $D$ of a distribution $D_1$ and $D_2$ with probability $p$ it is meant that the pdf (we adopt the convention that this term includes generalised functions, such as Dirac deltas for point masses) of $D$ can be written as a convex combination of the pdf of $D_1$ and that of $D_2$, that is $f_D=pf_{D_1}+(1-p)f_{D_2}$. For the unfamiliar reader, we note that in the literature this is often stated in short as $D=pD_1+(1-p)D_2$.
\begin{lemma}[{{\cite{Pensia}[Corollary 1]}}]\label{lem:unifprod}
Let $X\sim$Uniform$[0,1]$ (or $X\sim$Uniform$[-1,0]$) and $Y\sim$Uniform$[-1,1]$ be independent random variables. Let $P$ be the distribution of the $XY$ and $\delta_0$ the Dirac delta at $0$. Let $D$ be the distribution obtained as mixture of $\delta_0$ and $P$ with probability $\nicefrac{1}{2}$. Then $D$ is the mixture of a Uniform$[-\nicefrac{1}{2},\nicefrac{1}{2}]$ and some distribution $Q$ with probability $\ln(2)/4$.
\end{lemma}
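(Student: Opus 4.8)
The plan is to compute the density of $P$ in closed form and then check that, on $[-\nicefrac{1}{2},\nicefrac{1}{2}]$, it pointwise dominates $\tfrac{\ln 2}{4}$ times the density of Uniform$[-\nicefrac{1}{2},\nicefrac{1}{2}]$; the desired mixture decomposition then follows immediately by subtracting this multiple and renormalising. First I would identify the law of $Z=XY$ when $X\sim$Uniform$[0,1]$: conditioning on $X=x$ with $x\in(0,1]$, the variable $xY$ is Uniform$[-x,x]$, with density $\tfrac{1}{2x}\mathbf{1}_{\{|z|\le x\}}$, so integrating over $x$ gives, for $z\in[-1,1]\setminus\{0\}$,
\[
f_P(z)=\int_{|z|}^{1}\frac{1}{2x}\,dx=\frac12\ln\frac{1}{|z|},
\]
with $f_P(z)=0$ for $|z|>1$ (and one checks $\int_{-1}^{1}f_P=1$). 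The case $X\sim$Uniform$[-1,0]$ reduces to this one: writing $X=-X'$ with $X'\sim$Uniform$[0,1]$ and using the symmetry of $Y$ about $0$ gives $XY=-X'Y\overset{d}{=}X'Y$, so $f_P$ is unchanged.

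Next I would use that, by definition of $D$, its generalised density is $f_D=\tfrac12\delta_0+\tfrac12 f_P$. For $z\in[-\nicefrac{1}{2},\nicefrac{1}{2}]$ we have $\ln\frac{1}{|z|}\ge\ln 2$, so the absolutely continuous part of $f_D$ is at least $\tfrac12\cdot\tfrac12\ln 2=\tfrac{\ln 2}{4}$ there, whereas the density $u$ of Uniform$[-\nicefrac{1}{2},\nicefrac{1}{2}]$ equals $1$ on that interval and $0$ outside it; the point mass at $0$ only strengthens the inequality. This yields the pointwise bound $f_D\ge\tfrac{\ln 2}{4}\,u$, so that $f_Q:=\bigl(1-\tfrac{\ln 2}{4}\bigr)^{-1}\bigl(f_D-\tfrac{\ln 2}{4}u\bigr)$ is a nonnegative generalised density of total mass $1$; hence $Q$ is a genuine distribution, and $D$ is the mixture, with probability $\tfrac{\ln 2}{4}$, of Uniform$[-\nicefrac{1}{2},\nicefrac{1}{2}]$ and $Q$, which is the claim (the constant $\tfrac{\ln 2}{4}$ being tight at $z=\pm\nicefrac{1}{2}$).

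I do not expect a real obstacle here: the only point that needs care is the bookkeeping of the Dirac mass at $0$, which I would handle by working throughout with generalised densities, consistently with the paper's convention for ``pdf'', and by noting that the delta can never violate the domination $f_D\ge\tfrac{\ln 2}{4}\,u$, only reinforce it.
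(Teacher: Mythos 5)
Your proof is correct. Note that the paper states this lemma as a cited result from Pensia et al.\ (their Corollary~1) and does not reproduce a proof, so there is no in-paper argument to compare against; your derivation — computing $f_P(z)=\tfrac12\ln\tfrac{1}{|z|}$ by conditioning on $X=x$, reducing the $X\sim\text{Uniform}[-1,0]$ case by the symmetry of $Y$, observing that the absolutely continuous part of $f_D$ dominates $\tfrac{\ln 2}{4}$ on $[-\tfrac12,\tfrac12]$ with equality at $\pm\tfrac12$, and extracting $Q$ as the normalised remainder — is the standard argument and matches what Pensia et al.\ do, including the correct handling of the $\delta_0$ mass as a generalised density.
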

\begin{corollary}[{{\cite{Pensia}[Corollary 2]}}]\label{cor:Luek2.5}
Let $X_1,\ldots,X_n$ be iid with distribution $D$ as defined in Lemma \ref{lem:unifprod}, where $n\ge C \ln(\nicefrac{2}{\epsilon})$ for some universal constant $C$. Then
\[\Pr\left( \forall\,z\in[-1,1],\:\exists\, S\subset [n]\: :|z-\sum_{i\in S}X_i|\le \epsilon\right)\ge 1-\epsilon.\]
\end{corollary}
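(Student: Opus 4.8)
The plan is to reduce the statement to the classical one-dimensional Random Subset Sum result of \cite{Lueker98} by exploiting the mixture structure of $D$ established in Lemma~\ref{lem:unifprod}. First I would invoke Lemma~\ref{lem:unifprod} to write each $X_i$ as a mixture, with probability $q := \ln(2)/4$, of a Uniform$[-\nicefrac{1}{2},\nicefrac{1}{2}]$ random variable and, with probability $1-q$, of a draw from the residual distribution $Q$; concretely $X_i = B_i V_i + (1-B_i) W_i$ where $B_i\sim\mathrm{Bernoulli}(q)$, $V_i\sim$ Uniform$[-\nicefrac12,\nicefrac12]$, $W_i\sim Q$, all independent. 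Then I would apply a Hoeffding (or Chernoff) bound to $\sum_i B_i$: since $n \ge C\ln(\nicefrac{2}{\epsilon})$ for a suitably large universal constant $C$, with probability at least $1-\epsilon/2$ there is an index set $I\subseteq[n]$ with $|I|\ge (q/2) n$ such that $X_i$ is an exact Uniform$[-\nicefrac12,\nicefrac12]$ variable for all $i\in I$.

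The second step is to condition on this event and discard every coordinate outside $I$. On the remaining $|I|\ge (q/2)n \ge (qC/2)\ln(\nicefrac2\epsilon)$ variables, which are i.i.d. Uniform$[-\nicefrac12,\nicefrac12]$, I would apply the classical RSS theorem (\cite{Lueker98}[Theorem~2.1 / Corollary~3.3], after the trivial rescaling by $2$ that turns Uniform$[-\nicefrac12,\nicefrac12]$ into Uniform$[-1,1]$ and the target interval $[-1,1]$ into $[-\nicefrac12,\nicefrac12]$, noting that subsets of a set containing $[-\nicefrac12,\nicefrac12]$-uniform variables can reach every point of $[-1,1]$ since $|I|$ is more than twice the number of summands needed). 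Choosing $C$ large enough so that $(q/2)C \ge C'$, the constant required by Lueker's theorem with accuracy $\epsilon$ and failure probability $\epsilon/2$, gives that conditionally, with probability at least $1-\epsilon/2$, every $z\in[-1,1]$ is $\epsilon$-approximated by some subset sum drawn from $I$ — hence \emph{a fortiori} from $[n]$.

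Finally I would remove the conditioning by a union bound over the two failure events (too few uniform coordinates; uniform coordinates insufficient to cover $[-1,1]$), each of probability at most $\epsilon/2$, yielding the claimed $1-\epsilon$. The only genuinely delicate point is bookkeeping the constants: one must verify that a single universal $C$ can be chosen so that $n\ge C\ln(\nicefrac2\epsilon)$ simultaneously forces (i) the Hoeffding bound on $\sum_i B_i$ to beat $\epsilon/2$ and (ii) the surviving count $(q/2)n$ to exceed Lueker's threshold for accuracy $\epsilon$ and confidence $1-\epsilon/2$; since both requirements are of the form ``$n \ge \text{const}\cdot\ln(1/\epsilon)$'' this is routine, but it is where the argument could silently go wrong, so I would carry it out explicitly. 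Everything else — the rescaling, the observation that subsets of $I$ are also subsets of $[n]$, the final union bound — is immediate.
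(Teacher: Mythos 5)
The paper does not prove this statement: it is restated verbatim from Pensia et al.\ \cite{Pensia}[Corollary~2] (with two small typographical corrections, as noted at the start of Appendix~\ref{apx:new_pensia}) and is used as a black-box ingredient in the proof of Theorem~\ref{thm:pensia}. There is therefore no paper-internal proof to compare against. That said, your three-step strategy --- split off the uniform component of the mixture via a Hoeffding bound on $\sum_i B_i$, apply the one-dimensional RSS theorem to the surviving uniform coordinates, then union-bound the two failure events --- is exactly how Pensia et al.\ argue, and it is also the structure of the paper's own proof of the analogous fixed-size result, Corollary~\ref{cor:SRSS_amp}, in Appendix~\ref{apx:amplif}.

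The one place where the argument as written needs tightening is the rescaling step. Multiplying $|I|$ i.i.d.\ Uniform$[-\nicefrac12,\nicefrac12]$ variables by $2$ and invoking the $[-1,1]$-to-$[-1,1]$ form of Lueker's theorem yields $\epsilon$-approximation of targets only in the half-length interval $[-\nicefrac12,\nicefrac12]$ in the original scale, not in $[-1,1]$. You gesture at the remedy (``$|I|$ is more than twice the number of summands needed'') but do not carry it out, and as stated the rescaling alone does not reach the claimed range. One clean way to close this: split $I$ into two disjoint halves $I_1,I_2$, each still of size $\Omega(\ln(\nicefrac1\epsilon))$, write any target $z\in[-1,1]$ as $z=\nicefrac z2+\nicefrac z2$ with each half-target in $[-\nicefrac12,\nicefrac12]$, $\nicefrac{\epsilon}{2}$-approximate each half-target by a subset sum over the corresponding $I_j$, and absorb the two RSS failure probabilities into the $\nicefrac{\epsilon}{2}$ budget by a union bound. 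This only inflates the universal constant $C$ by a fixed factor and leaves the $O(\ln(\nicefrac1\epsilon))$ sample requirement intact. With that step spelled out, the proposal is sound and takes the intended route.
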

\begin{proof}[Proof of Theorem \ref{thm:pensia}]
    The key idea is exploiting Corollary \ref{cor:SRSS_amp} at each step of the pruning strategy established in \cite{Pensia}[Theorem 1], where Corollary \ref{cor:Luek2.5} is used instead. Without loss of generality, we replace their $\min\{\epsilon,\delta\}$ with $\epsilon$. 
    For the sake of easily following the approach adopted in \cite{Pensia}, let us define $n^*(x)$ as the function
    \begin{equation}\label{n^*bis}
        n^*(x)=c_{\text{amp}}\frac{\log_{2}^2 {(kx)}}{H_{2}\left(\frac{k}{n^*(x)}\right)}
    \end{equation}
    where $k=\gammastar n^*(x)$. In the following, we use $n^*$ as short for $n^*(1/\epsilon)$, and we will only explicitely provide an argument for $n^*$ when it is different than $1/\epsilon$. For instance, in the last step of the proof, we will use $n^*(\nicefrac{2\ell d_i d_{i-1}}{\epsilon})$, which matches the definition of $n_i^*$ given in  Eq. \ref{n^*}.
    
    Consider \cite{Pensia}[Lemma 1]. When approximating a single link (that is, a weight), after the overparameterization (which creates an additional layer of width $2n^*$ in between the input and the output node) via $4n^*$ links, instead of pruning via Corollary \ref{cor:Luek2.5}, we prune via Corollary \ref{cor:SRSS_amp} twice in the second layer, that is we ensure that only $k=\gammastar n^*$ \futurenote{add the following caveat if gamma not constant:(here $\gammastar$ is not necessarily a fixed constant, as per the previous discussion)} edges yield the desired approximation, both in the edges corresponding to the positive part of the input weights and in those corresponding to the negative part. Thus we obtain at most $4k$ surviving edges, after the preprocessing step and the pruning mask is applied.
    This yields a sparsity of at least $\alpha'=1-\gammastar$.
    Note that it is because of the preprocessing step that we go from distributions Uniform$[-1,1]$ to distributions $D$, as defined in Lemma \ref{lem:unifprod}, which are shown to be a mixture with Uniform$[-1,1]$ and therefore can be also handled via Corollary \ref{cor:SRSS_amp}.
    
    Consider \cite{Pensia}[Lemma 2]. When approximating a real-valued multivariate linear function, after the overparameterization (which creates an additional layer of width $2dn^*(\nicefrac{d}{\epsilon})$ in between the $d$ input nodes and the output node) one simply iterates the ideas of the previous case $d$ times. For each input node, the overparameterization surviving the preprocessing step on the weights of the input layer is $4n^*(\nicefrac{d}{\epsilon})$. Pruning the second layer of the overparameterized link for each input via Corollary \ref{cor:SRSS_amp} with $k=\gammastar n^*(\nicefrac{d}{\epsilon})$ (again, performing this both on the edges corresponding to the positive part of the input weights and in those corresponding to the negative part), instead of exploiting Corollary \ref{cor:Luek2.5}, yields that at most $4dk$ edges survive after the pruning mask is applied. This yields a sparsity of at least $\alpha'=1-\gamma'$.
    
    Finally, consider \cite{Pensia}[Lemma 3]. When approximating a layer with input dimension $d_1$ and output dimension $d_2$, after the overparameterization (which creates an additional layer of width $2d_1n^*(\nicefrac{d_1d_2}{\epsilon})$ in between the input nodes and the output nodes) one iterates the ideas of the previous case $d_1$ times in the input layer through the same preprocessing step, and $d_2$ times in the output layer, one for each of the $d_1$ blocks created by the preprocessing (essentially the weights in the input layer are \textit{re-used} $d_2$ times). For each input node, the overparameterization surviving the preprocessing step is at most $2(d_2+1)n^*(\nicefrac{d_1d_2}{\epsilon})$. Overall, after the preprocessing step, we have at most $2d_1(d_2+1)n^*(\nicefrac{d_1d_2}{\epsilon})$ parameters. 
    We then use Corollary~\ref{cor:SRSS_amp} (with $k=\gammastar n^*(\nicefrac{d_1d_2}{\epsilon})$) to prune the number of parameters between the introduced additional layer and the $d_2$ outputs down to $2d_1 d_2\gammastar n^*(\nicefrac{d_1d_2}{\epsilon})$.
    As for the edges between the $d_1$ inputs and the additional layer, only those that reach a neuron in the additional layer, from which there is at least one outgoing edge towards the $d_2$ outputs, are used; since for each of the $d_1$ blocks of $2n^*(\nicefrac{d_1d_2}{\epsilon})$ neurons in the additional layer we only kept $2\gammastar n^*(\nicefrac{d_1d_2}{\epsilon})$ outgoing edges to each of the $d_2$ output neurons, in the worst case (all the nodes involved in the subsetsums are disjoint) we keep $2d_2 \gammastar n^*(\nicefrac{d_1d_2}{\epsilon})$ of them for each of the $d_1$ neurons. 
    Globally, we are left with a total of at most $2d_1 d_2 \gammastar n^*(\nicefrac{d_1d_2}{\epsilon})$ edges both in the input layer and in the output layer, thus a total of $4d_1d_2 \gammastar n^*(\nicefrac{d_1d_2}{\epsilon})$ edges survive the pruning. The density of the surviving edges is then less than \[\frac{4d_1d_2 \gammastar n^*(\nicefrac{d_1d_2}{\epsilon})}{2d_1^2n^*(\nicefrac{d_1d_2}{\epsilon})+2d_1d_2 n^*(\nicefrac{d_1d_2}{\epsilon})}=\frac{2d_2 \gammastar }{d_1+d_2 }=\frac{(d_1\frac{d_2}{d_1}+d_2)\gammastar}{d_1+d_2}\le \rho_1\gammastar,\]
    where $\rho_1 = \max\left\lbrace \nicefrac{d_1}{d_2},\nicefrac{d_2}{d_1}\right\rbrace$ and in the last inequality we used that $d_1\nicefrac{d_2}{d_1}+d_2\le\rho_1(d_1+d_2)$ since $\rho_1\ge 1$. This ensures a sparsity $\alpha'\ge 1-\rho_1\gammastar$.

    \cite{Pensia}[Theorem 1] consists of performing, for every $i\in[\ell]$, the previous step on layer $i$ with input dimension $d_{i-1}$ and output dimension $d_i$. The overparameterization creates an additional layer of nodes of width $2d_{i-1}n^*(\nicefrac{2\ell d_{i-1}d_i}{\epsilon})$ in between the $d_{i-1}$ input nodes and the $d_i$ output nodes. Since the construction is stacked $\ell$ times, this generates $2\ell$ layers for the overparameterized network, which will therefore have a starting number of parameters 
    \[m = \sum_{i=1}^\ell 2 d_{i-1}^2 n^*(\nicefrac{2\ell d_{i-1}d_{i}}{\epsilon}) + 2 d_{i-1}d_{i} n^*(\nicefrac{2\ell d_{i-1}d_{i}}{\epsilon}).\]
    Corollary \ref{cor:SRSS_amp} applied to each stacked overparameterized layer instead of Corollary \ref{cor:Luek2.5}  as in the previous step yields that the total number of parameters left after the pruning is 
    \[m_t \le \sum_{i=1}^\ell 4 d_{i-1}d_{i} k_i,\] 
    where $k_i=\gammastar n^*(\nicefrac{2\ell d_{i-1}d_{i}}{\epsilon})$. Recall that $\rho = \max_{i} \rho_i$, where $\rho_i=\max\{\nicefrac{d_i}{d_{i-1}}, \nicefrac{d_{i-1}}{d_i}\}\ge 1$. Recall that $\gamma=\rho\gammastar$.
    We obtain that
    \begin{align*}
    m_t  \le &\sum_{i=1}^\ell 2 d_{i-1}\frac{d_{i}}{d_{i-1}}d_{i-1} k_i + 2 d_{i-1}d_{i} k_i \\ 
    \le &\sum_{i=1}^\ell 2 d_{i-1} \rho_i d_{i-1} k_i + 2 d_{i-1}d_{i} \rho_i k_i \\ 
    \le &\rho\sum_{i=1}^\ell 2 d_{i-1}^2 k_i + 2 d_{i-1}d_{i} k_i \\ = & \rho\gammastar \sum_{i=1}^\ell 2 d_{i-1}^2 n^*(2\ell\nicefrac{ d_{i-1}d_{i}}{\epsilon}) + 2 d_{i-1}d_{i} n^*(2\ell\nicefrac{ d_{i-1}d_{i}}{\epsilon})=\gamma m
    \end{align*}
    We then get that the density of the edges surviving the pruning is $\nicefrac{m_t}{m} \le \gamma$, which implies a sparsity of at least $\alpha = 1 - \gamma$. 

\end{proof}


\section{Proof of Theorem \ref{thm:lowerBoundStrong}}
\label{apx:lower}

\begin{proof}[Proof of Theorem \ref{thm:lowerBoundStrong}]
    Consider the space $\mathcal{W}_k=\{W \in \mathbb R^{d\times d}: \|W\|\leq \sqrt k\}$, and let $\mathcal{P}_k$ be a $2\epsilon$-separated set of $\mathcal{W}_k$, i.e. a subset 
    $\mathcal{P}_k \subset \mathcal{W}_k$ such that for all distinct $ W , W' \in \mathcal{P}_k$ it holds $\|W - W'\| > 2\epsilon$. We denote $\mathcal{W}=\mathcal{W}_1$, $\mathcal{P}=\mathcal{P}_1$, and the set of all possible subnetworks of $g$ as $\mathcal{G}$ (note that this does not denote $\mathcal{G}_1$, the set of all subnetworks of size $1$).
    
    \paragraph{Step 1: Packing argument.} In \cite{Pensia}[Theorem 2, Step 1], it is shown that any function $g'$ can only approximate at most one member of $\mathcal{P}$ for bounded input $x$ (say, $\|x\|\le 1$). In particular, this also applies to functions $g'$ representing the elements of $\mathcal{G}_k$.
    %
    \paragraph{Step 2: Relation between $|\mathcal{G}_k|$ and $|\mathcal{P}_k|$.} By \textit{Step 1}, in \cite{Pensia}[Theorem 2, Step 2] it is shown that $|\mathcal{P}| \le 2|\mathcal{G}|$, under the assumption of Eq. \ref{eq:lowerBdApprox}, with $\mathcal{G}_k$ replaced by $\mathcal{G}$). Therefore, also by \textit{Step 1}, replacing $\mathcal{P}$ with $\mathcal{P}_k$ and $\mathcal{G}$ with $\mathcal{G}_k$ in \cite{Pensia}[Theorem 2, Step 2], it holds that $|\mathcal{P}_k| \le 2|\mathcal{G}_k|$. 
    Note that $|\mathcal{G}_k|=\binom{n}{k}$, the number of different ways in which we can select $k$ parameters out of $n$, so we actually get
    \begin{equation}
        \label{eq:lb_binom}
        \binom{n}{k} >  \frac{|\mathcal {P}_k|}{2} .
    \end{equation}
    
    \paragraph{Step 3: Lower bound on $|\mathcal{P}_k|$.} Let us now consider a $2 \epsilon$-separated set $\mathcal{P}_k^{\max}$ of maximal cardinality. In \cite{Pensia}[Theorem 2, Step 3] it is shown that \[|\mathcal{P}^{\max}|\ge\frac{\text{Vol}(\mathcal{W})}{\text{Vol}(\{W\in\mathcal{W}:\|W\|\le 2\epsilon\})}=\left(\frac{1}{2\epsilon}\right)^{d^2}.\] 
    Here $\text{Vol}$ is the Lebesgue measure in $\mathbb{R}^{d\times d}$ identified with $\mathbb{R}^{d^2}$.
    %
    By the exact same argument, replacing $\mathcal{W}$ with $\mathcal{W}_k$ and thus $\mathcal{P}^{\max}$ with $\mathcal{P}_k^{\max}$, it holds that
    \[|\mathcal{P}_k^{\max}|\ge\frac{\text{Vol}(\mathcal{W}_k)}{\text{Vol}(\{W\in\mathcal{W}_k:\|W\|\le 2\epsilon\})}=\left(\frac{\sqrt{k}}{2\epsilon}\right)^{d^2}.\] 
    Combining this fact with Eq. \ref{eq:lb_binom} applied to $\mathcal{P}_k^{\max}$ implies that 
    \begin{equation}\label{eq:lower_bound_binomial}
        \binom{n}{k} > \frac 12 \left(\frac {\sqrt k}{2\epsilon}\right)^{d^2}.
    \end{equation}

    \paragraph{Step 4: Lower bound on $n$.} Consider the standard bound found in \cite{macwilliams1977theory}
    \[{n \choose k}\leq\sqrt{\frac{n}{2\pi k(n-k)}}2^{n H_2(\nicefrac{k}{n})}.\]
    and combine it with with Eq. \ref{eq:lower_bound_binomial}. It follows that
    \[
     2^{nH_{2}\left(\frac{k}{n}\right)}
     \geq
     \frac 12 \sqrt {\frac{2\pi k(n-k)}{n}} \left(\frac {\sqrt k}{2\epsilon}\right)^{d^2}
    \]
    and taking the logarithm of both sides yields the sought lower bound on $n$: 
    \begin{align}
        nH_{2}\left(\frac{k}{n}\right)
        & \geq \frac{1}{2}\log_2\left(\frac{2\pi k(n-k)}{n}\right)+d^2\log_{2}\frac{\sqrt{k}}{2\epsilon} -1\label{eq:lower_bound_entropy} \\
        & \geq d^2\left(\frac{1}{2}\log_{2}k+\log_{2}\frac{1}{\epsilon}-1 \right)-1\label{eq:after_using_lambda}\\
        & \geq\frac{d^2}{2}\log_{2}\frac{k}{\epsilon}\label{eq:after_dropping_1},
    \end{align}
    where from Eq. \ref{eq:lower_bound_entropy} to Eq. \ref{eq:after_using_lambda} we exploited the definition of $\lambda$, which ensures that the first term in the r.h.s. of Eq. \ref{eq:lower_bound_entropy} is nonnegative;\footnote{This term being nonnegative is equivalent to $k(1-\nicefrac{k}{n})\ge\nicefrac{1}{2\pi}$, and since $1\le k\le\lambda n$, any $\lambda\le 1-\nicefrac{1}{2\pi}$ ensures it.} from Eq. \ref{eq:after_using_lambda} to Eq. \ref{eq:after_dropping_1} we used that for all $\epsilon<\nicefrac{1}{16}$ it holds that 
    \[d^2\left(\log_{2}\frac{1}{\epsilon}-1\right)\ge 1.\]  
\end{proof}

\section{Details of comparison with Malach et al. \cite{malachProvingLotteryTicket2020}}
\label{apx:malach}
We show that $cz \ge c_{\text{amp}} \frac{\log_2^2 (c z^2)}{\log_2(z)}$ holds for a big enough constant $c$. Recall that $z=\frac{m_t}{\epsilon}$, so we can always assume $\log_2(z) \ge 1$.
We have 
\begin{align}
\log_2^2 (c z^2) &= (\log_2 (c) + 2\log_2 (z))^2 \\
&= \log_2^2 (c) + 4\log_2 (c) \log_2 (z) + 4\log_2^2 (z) \\
&\stackrel{(a)}{\le} 6(\log_2^2 (c) + \log_2^2 (z)) \\
&\stackrel{(b)}{\le} 12 \log_2^2 (c) \log_2^2 (z),
\end{align}
where in $(a)$ we used that $2ab \le a^2 + b^2$, and in $(b)$ that $a+b \le 2ab$ for $a$ and $b$ greater than 1.

We can then focus on showing that there is a big enough constant $c$ such that $cz \ge 12 c_{\text{amp}} \log_2^2 (c) \log_2^2 (z)$. We get $c \ge 12 c_{\text{amp}} \log_2^2 (c) \frac{\log_2^2 (z)}{z}$, and we have
\begin{align}
    \log_2^2 (c) \frac{\log_2^2 (z)}{z} & \le \log_2^2 (c) \\
    &\le \sqrt{c}.
\end{align}

We can then focus on $c \ge 12 c_{\text{amp}} \sqrt{c}$, which is satisfied for $c \ge 144 c_{\text{amp}}^2$.

\end{document}